\begin{document}

\title{A VAE Approach to Sample Multivariate Extremes}

\author{\name Nicolas Lafon \email nicolas.lafon@lsce.ipsl.fr \\
       \addr Laboratoire des Sciences du Climat et de l'Environnement\\
       ESTIMR, CNRS-CEA-UVSQ\\
       Gif-sur-Yvette, France
       \AND
       \name Philippe Naveau \email philippe.naveau@lsce.ipsl.fr \\
       \addr Laboratoire des Sciences du Climat et de l'Environnement\\
       ESTIMR, CNRS-CEA-UVSQ\\
       Gif-sur-Yvette, France
       \AND 
       \name Ronan Fablet \email ronan.fablet@imt-atlantique.fr \\
       \addr IMT Atlantique, Lab-STICC\\
       ODISSEY, INRIA\\
       Brest, France}

\maketitle

\begin{abstract}
 Generating accurate extremes from an observational data set is crucial when seeking to estimate risks associated with the occurrence of future extremes which could be larger than those already observed. Applications range from the occurrence of natural disasters to financial crashes. Generative models from the machine learning (ML) community do not apply to extreme samples without careful adaptation. Besides, asymptotic results from extreme value theory (EVT) give a theoretical framework to model multivariate extreme events. Bridging these two fields, this paper details a variational autoencoder (VAE) approach for sampling multivariate heavy-tailed distributions, in which extremes of particularly large intensity are likely to occur. We illustrate the relevance of our approach on a synthetic data set and on a real data set of discharge measurements along the Danube river network. The latter shows the potential of our approach for flood risks' assessment. In addition to outperforming the vanilla VAE for the tested data sets, we also provide a comparison with a competing EVT-based generative approach. In the tested cases, our approach better captures the dependence structure between extreme events.

\end{abstract}

\begin{keywords}
multivariate extreme value theory, variational auto-encoders, generative models, neural network, environmental risk
\end{keywords}

\section{Introduction}
Simulating samples from an unknown distribution is a task that various studies have successfully tackled in the ML community during the past decade. This has led to the emergence of generative models, such as generative adversarial networks (GANs) \citep{goodfellow2020generative}, VAEs \citep{kingma2013auto, rezende2014stochastic}, or normalizing flows \citep{rezende2015variational} (NFs). 
As ML tasks usually focus on average behaviors rather than rare events, these methods were originally not tailored to extrapolate upon the largest value of the training data set. This is a major shortcoming when dealing with extremes. Risk assessment in worst-case scenarios requires accurately sampling extremes at large quantiles, beyond the maximum values observed in the data set \citep{embrechts1999extreme}. Figure 1 illustrates this challenge with a two-dimensional case study: using a VAE approach, we aim, when relevant, to consistently generate samples in the extreme region (black square) from observations (blue dots), none of which lie within that region. In this context, EVT characterizes the probabilistic structure of extreme events and provides a theoretically-sound statistical framework to analyze them. A central concept in this framework is heavy-tail analysis \citep{resnick2007heavy}, which investigates phenomena where extreme values occur with non-negligible probability. Heavy-tailed distributions are thus particularly relevant in applications where rare but severe events play a crucial role. Data modeled by heavy-tailed distributions cover a wide range of application fields, e.g., hydrology \citep{anderson1998modeling,rietsch2013network}, particle motion \citep{fortin2015applications}, finance \citep{bradley2003financial}, Internet traffic \citep{hernandez2004variable}, and risk management \citep{chavez2004extreme,das2013four}.\\
\begin{figure}\label{fig: pb extreme}
\centering
    \includegraphics[width=0.6\columnwidth]{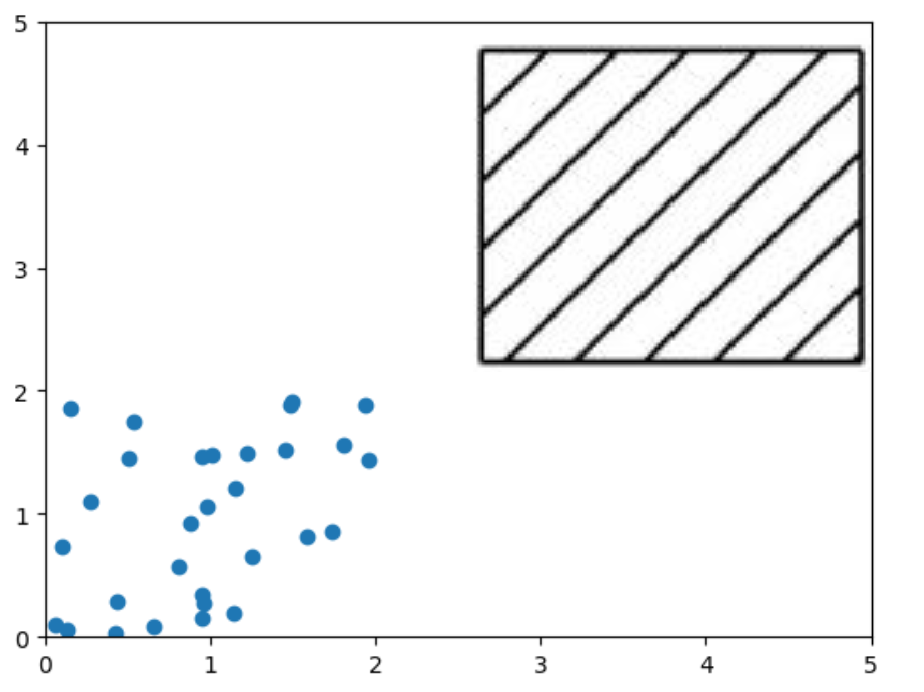}
    \label{fig: Ext regions}
\caption{How to sample from observations (blue dots) in extreme regions (black square) to estimate probability of rare events?}
\label{fig: QQ plots}
\end{figure}

Recently, the intersection of ML and EVT has attracted growing attention. Several studies have highlighted the benefits of combining the two fields for diverse tasks. Beyond the generation of extremes, which is further discussed in the next paragraph, applications include dimensionality reduction \citep{drees2021principal}, quantile function approximation \citep{pasche2022neural}, outlier detection \citep{rudd2017extreme}, and classification in tail regions \citep{jalalzai2018binary}.\\ 

\noindent
\textbf{Related works:} As demonstrated in \cite{jaini2020tails,huster2021pareto}, the output random variable of a neural network associated with a light-tailed input random variable cannot be heavy-tailed. This limitation has been the main motivation for adapting GANs and NFs to the sampling of heavy-tailed distributions.
The main line of research relies on introducing heavy-tailed latent variables. This strategy has been implemented through GANs with Pareto priors in \cite{feder2020nonlinear,huster2021pareto}, as well as NFs with Student-t priors in \cite{jaini2020tails,laszkiewicz2022marginal}, the latter allowing for marginals with heterogeneous tail behaviors.
A second approach treats marginal distributions and dependence structure separately. In \cite{boulaguiem2022modeling}, the marginals are first estimated by fitting generalized Pareto distributions. To learn the dependence structure, the marginals are transformed to uniform distributions, and a GAN is trained in this transformed space, effectively capturing the copula representation \citep{embrechts2009copulas}.
Another strategy targets the approximation of heavy-tailed quantile functions. Since neural networks with Rectified Linear Unit (ReLU) activations cannot directly map $[0,1]$ to the quantile function of a heavy-tailed distribution, \cite{allouche2022ev} proposed a GAN generator tailored to this mapping. Their results support the theoretical soundness of this approach for approximating true quantile functions.
Finally, an empirical direction has been proposed in \cite{bhatia2021exgan}, where GANs are recursively trained on tail samples to progressively reach a targeted return level.\\

\noindent
\textbf{Main contributions:} 
To our knowledge, our study is the first attempt to bridge VAE and EVT. Recent studies suggest that state-of-the-art likelihood-based models, including VAEs, may, in some examples, capture the spread of the true distribution better than GANs \citep[see, e.g.,][]{razavi2019generating,nash2021generating}. This makes VAE an interesting way of explicitly exploiting the EVT framework to generate realistic and diverse extremes.
Our main contributions are as follows:
\begin{itemize}
    \item We first demonstrate that a VAE with standard parameterization cannot generate heavy-tailed distributions. We then propose a VAE framework to sample extremes from heavy-tailed distributions.

\item Our approach can capture complex dependence structures between extremes by leveraging a polar representation of the data. It also enables direct sampling from the angular measure, which characterizes the asymptotic dependence between extremes (see Section \ref{mve})

\item We illustrate the ability of our approach to generate realistic extremes in numerical experiments on both synthetic and real data sets, outperforming the Vanilla VAE and an EVT-based GAN approach \citep{huster2021pareto}. 

\end{itemize}


\noindent
\textbf{Organization of the paper:} In Section \ref{background}, we recall the basic principles of VAE and EVT. In Section 3, we present our main theoretical results concerning, on the one hand, the tail distribution of the marginals generated by a VAE and, on the other hand, the angular measure of generative models. All detailed proofs are delayed in Appendix \ref{app: proofs}, listed in order of appearance in the paper. 
We detail the proposed VAE framework for multivariate extremes in Section \ref{vif} and describe the associated training setting in Section \ref{lf}. Section \ref{experiments} is dedicated to experiments. Section \ref{ccl} is devoted to concluding remarks.

\section{Background}\label{background}
In this section, we present background knowledge about VAE, and we give an introduction to univariate and multivariate heavy-tailed distributions.

\subsection{VAE framework}\label{amortize VI}

To generate a sample from a random vector $\mathbf{X}$, a VAE relies on a two-step sampling strategy:
\begin{itemize}
\item A latent vector $\mathbf{z}$ is drawn from a prior distribution $p_\alpha(\mathbf{z})$, parameterized by $\alpha$;
\item A sample $\mathbf{x}$ is then drawn from the conditional distribution $p(\mathbf{x}\mid \mathbf{z})$.
\end{itemize}
Since $p(\mathbf{x}\mid\mathbf{z})$ is in general not known, a parametric approximation $p_\theta(\mathbf{x}\mid\mathbf{z})$, referred to as the likelihood or probabilistic decoder, is introduced, with $\theta$ a set of learnable parameters. The aim is to find a parameterization that enables the generation of realistic samples of $\mathbf{X}$. To this end, the VAE framework introduces a variational distribution (or probabilistic encoder) $q_\phi(\mathbf{z}\mid \mathbf{x})$, parameterized by $\phi$, to approximate the true posterior distribution $p(\mathbf{z}\mid \mathbf{x})$. Training then amounts to maximizing the evidence lower bound (ELBO) with respect to the parameters $(\alpha, \phi, \theta)$. Formally, given $N$ independent samples $(\mathbf{x}^{(i)})_{i=1}^{N}$ of $\mathbf{X}$, one has for each $i$,
\[
\log p(\mathbf{x}^{(i)}) \geq L(\mathbf{x}^{(i)},\alpha,\theta,\phi),
\]
where the ELBO $L$ is defined as
\begin{equation}\label{eq: ELBO_VAE}
    L(\mathbf{x}^{(i)},\alpha,\theta,\phi) 
    = -D_{KL}\!\left(q_\phi(\mathbf{z}\mid \mathbf{x}^{(i)}) \,\|\, p_\alpha(\mathbf{z})\right)
    + \mathbb{E}_{q_\phi(\mathbf{z}\mid \mathbf{x}^{(i)})}\!\left[\log p_{\theta}(\mathbf{x}^{(i)} \mid \mathbf{z})\right].
\end{equation}
The operator $D_{KL}$ denotes the Kullback--Leibler (KL) divergence, defined for two distributions $q$ and $p$ by
\[
D_{KL}(q \,\|\, p) = \int q(z)\,\log\!\left(\tfrac{q(z)}{p(z)}\right)\,dz,
\]
which is always non-negative and equals zero if and only if $q = p$ almost everywhere. The ELBO over the whole data set is obtained by averaging Equation~\eqref{eq: ELBO_VAE} over the $N$ samples of $\mathbf{X}$.\\

To optimize the parameters $(\alpha, \phi, \theta)$, \cite{kingma2013auto} and \cite{rezende2014stochastic} introduced a training scheme that makes Equation~\eqref{eq: ELBO_VAE} differentiable with respect to the parameters. First, the KL divergence term often admits a closed-form expression and is therefore easily differentiable with respect to $(\alpha, \phi)$. Second, the expectation in Equation~\eqref{eq: ELBO_VAE} is approximated by an unbiased Monte Carlo estimator,
\[
\frac{1}{L}\sum_{l=1}^L \log p_{\theta}(\mathbf{x}^{(i)} \mid \mathbf{z}^{(i,l)}),
\]
where $\mathbf{z}^{(i,l)}$ are sampled from the random vector $\tilde{\mathbf{Z}}^{(i)}$ with distribution $q_\phi(\mathbf{z} \mid \mathbf{x}^{(i)})$. Differentiability is ensured by an explicit reparameterization, known as reparameterization trick, which introduces a function $g_{\phi}$, differentiable with respect to $\phi$, such that
\begin{equation}\label{eq: reparametrization}
   \tilde{\mathbf{Z}}^{(i)}  = g_{\phi} (\mathbf{x}^{(i)},\mathcal{E}),
\end{equation}
with $\mathcal{E}$ a suitable random variable. Beyond ensuring differentiability, this reparameterization yields a low-variance estimator of the gradient, which is crucial for stable and efficient training. When explicit reparameterization is not feasible, implicit reparameterization can be employed \citep[see][]{figurnov2018implicit}. Further details on implicit reparameterization are provided in Appendix~\ref{app: implicit}.

\begin{example}\label{ex: standard VAE}
The most common parameterization of a VAE with ${\bf z} \in \mathbb{R}^n$ and ${\bf x} \in \mathbb{R}^m$ is
\begin{align*}
    p({\bf z})&= \mathcal{N}({\bf z} ; {\bf 0},{\bf I}_{n}),\\
    p_\theta({\bf x}\mid{\bf z})&= \mathcal{N}\!\left({\bf x} ; {\bf \mu}_{\theta}({\bf z}),\operatorname{diag}({\bf \sigma}_\theta({\bf z}))^2\right),\\
    q_\phi({\bf z}\mid{\bf x})&= \mathcal{N}\!\left({\bf z} ; {\bf \mu}_{\phi}({\bf x}),\operatorname{diag}({\bf \sigma}_{\phi}({\bf x}))^2\right),
\end{align*}
where ${\bf \mu}_{\theta}$ and ${\bf \sigma}_{\theta}$ (resp. ${\bf \mu}_{\phi}$ and ${\bf \sigma}_{\phi}$) are neural network functions with parameters $\theta$ (resp. $\phi$); $\operatorname{diag}(\mathbf{v})$ denotes the diagonal matrix with the vector $\mathbf{v}$ on its diagonal; and $\mathcal{N}({\bf z};{\bf \mu},{\bf \Sigma})$ is the pdf in variable ${\bf z}$ of the multivariate normal distribution with mean ${\bf \mu}$ and covariance matrix ${\bf \Sigma}$. In this framework, the reparameterization trick takes the form
\[
g_{\phi} (\mathbf{x},\mathcal{E}) = {\bf \mu}_{\phi}({\bf x}) + {\bf \sigma}_{\phi}({\bf x})\odot\mathcal{E},
\]
where $\mathcal{E} \sim \mathcal{N}({\bf 0} , {\bf I}_n)$ and $\odot$ denotes the element-wise product. In this case, the KL divergence of Equation~\eqref{eq: ELBO_VAE} simplifies to
\[
D_{KL}\!\left(q_\phi(\mathbf{z}\mid \mathbf{x}^{(i)}) \,\|\, p(\mathbf{z})\right) 
= \tfrac{1}{2}\sum_{j=1}^n \left(1+2\log[{\bf \sigma}_{\phi}({\bf x}^{(i)})]_j -  [{\bf \mu}_{\phi}({\bf x}^{(i)})]_j^2 - [{\bf \sigma}_{\phi}({\bf x}^{(i)})]_j^2 \right),
\]
where $[\mathbf{v}]_j$ is the $j^{th}$ component of the vector $\mathbf{v}$.\\
We refer to this parameterization as the Standard VAE in the following.
\end{example}

\subsection{Univariate Extremes}\label{univariate}
When modelling univariate extremes, generalized Pareto (GP) \citep{pickands1975statistical} distributions are of great interest. The GP survival function is defined for $\xi \in \mathbb{R}$ and $\sigma >0$ by
\begin{equation}\label{eq: GP}
    \bar{H}_{\sigma,\xi} (x)= \left(1+\xi\frac{x}{\sigma}\right)_+^{-1/\xi},
\end{equation}
where $a_+=0$ if $a<0$. The scalar $\xi$ is called the shape parameter. Note that Equation \eqref{eq: GP} is extended to $\xi = 0$, with $\bar{H}_{\sigma,0}$ survival function of the exponential distribution of scale parameter $\sigma$.\\
Given a random variable $X$ with cdf $F$, GP distributions appear under mild conditions as the simple limit of the threshold exceedance function $F_u(x) = P(X-u\leq x \mid X>u)$ when $u\to \infty$ \citep{balkema1974residual}. To be explicit, there exists $\xi\in \mathbb{R}$ and a strictly positive function $\sigma(\cdot)$ such that
$$
 \lim_{u\to x_F}\sup_{0<x<x_F-u} \mid F_u(x)-H_{\sigma(u),\xi}(x)\mid = 0,
$$
with $x_F = \sup \{x\ \text{ s.t. } F(x)<1\}$ the right endpoint of $F$, and $H$ the cdf of the GP distribution.\\
The shape parameter $\xi$ of the GP approximation of $F_u$
encompasses the information about the tail of $X$. In the following, we consider that:
\begin{itemize}
    \item $\xi\leq 0$ corresponds to light-tailed distribution,
    \item $\xi>0$ corresponds to heavy-tailed distribution.
\end{itemize}
\begin{remark}\label{rk: sampling_GP}
    A simple yet efficient way to sample from a GP distribution with parameters $\xi$ and $\sigma$ is to multiply an inverse gamma distributed random variable 
with shape $\frac{1}{\xi}$ and rate $\sigma$ by a unit and independent exponential one. This multiplicative feature is 
essential for understanding
the pivotal role of inverse-Gamma random variables in our sampling scheme in Section \ref{multrad}. 
\end{remark}
\begin{remark}\label{rk: lt_ht}
    Notice that, given a light-tailed distribution with survival function $\bar{F}$, all its higher-order moments exist and are finite, and $\lim_{u\to \infty}u^a\bar{F}(u) = 0$ for any $a>0$. In particular, Gaussian distributions are light-tailed. At the contrary, not all higher-order moments are finite for a heavy-tailed distribution.  
\end{remark}
In this work, we focus on heavy-tailed distributions, which can be seen as the distributions for which extremes have the greater intensity. The shape parameter governs the heaviness of a distribution’s tail: the larger it is, the heavier the tail.
\\ 
A final important notion regarding extreme values is the so-called regular variation property. 
\begin{definition}\label{def: RV}
    A random variable $X$ is said to be regularly varying with tail index $\alpha > 0$, if
\begin{equation}\label{eq: regular variation}
    \lim_{t \to +\infty}P(X>tx\ \mid \ X>t) = x^{-\alpha}.
\end{equation}
\end{definition}
Importantly, $X$ regularly varying equates to $X$ heavy-tailed with $\alpha = \frac{1}{\xi}$ \citep[see][Theorem 8.13.2]{bingham1989regular}.

\subsection{Multivariate Extremes}\label{mve}
By extending the notions developed in Section \ref{univariate}, a multivariate analogue of the GP distribution (Equation \ref{eq: GP}), referred to as multivariate GP, can be defined \citep[see][]{rootzen2006multivariate}. Under mild conditions, the multivariate exceedance distribution asymptotically follows a multivariate GP distribution. Additionally, the regular variation of Definition \ref{def: RV} can be extended to a multivariate regular property \citep[see, e.g.][for details]{resnick2007heavy}. As in the univariate case, the convergence of exceedance distributions to a multivariate GP distribution and the property of multivariate regular variation are closely related. Thus, a multivariate random vector is heavy-tailed if it has multivariate regular variation.\\ 
Let $\mathbf{X}$ be a random vector in $\left(\mathbb{R}^+\right)^m$. To define multivariate regular variation, we decompose $\mathbf{X}$ into a radial component $R = X_1+\cdots+X_m = \|\mathbf{X}\|$ and an angular component of the $(m-1)$-dimensional simplex $\mathbf{\Theta} = \frac{\mathbf{X}}{\|{\bf X}\|}$. 
\begin{definition}\label{def: MRV}
    $\bf X$ has multivariate regular variation if the two following properties are fulfilled:
\begin{itemize}
    \item The radius $R$ is regularly varying as defined in Equation \eqref{eq: regular variation};
    \item There exists a probability measure $\mathbf{S}$ defined on the $(m-1)$-dimensional simplex such that $(R,\mathbf{\Theta})$ satisfies
    \begin{equation}\label{eq: wan MRV}
         P\left(\mathbf{\Theta}\in \bullet \bigm| R>r\right) \overset{w}{\longrightarrow} \mathbf{S}(\bullet),
     \end{equation}
     where $\overset{w}{\longrightarrow}$ denotes weak convergence (see Appendix \ref{app: weak_conv}). $\bf S$ is called angular measure.
\end{itemize}
\end{definition}
Consequently, the radius is a univariate heavy-tailed random variable as described in Section \ref{univariate}. The tail index of a multivariate regularly varying random vector is defined as the tail index of its radius. Besides, Equation \eqref{eq: wan MRV} indicates that, if the radius is above a sufficiently high threshold, the respective distributions of the radius and the angle can be considered independent. 
Estimating the angular measure then becomes crucial to address tail events of the kind of $\{\mathbf{X}\in C\}$ for an ensemble $C$ such that $u=\inf\{\|\mathbf{x}\|,\mathbf{x} \in C\}$ is large. This is especially true to assess the probability of joint extremes.\\ 
More generally, the estimation of the angular measure $\bf S$, although difficult due to the scarcity of examples \citep{clemenccon2021concentration}, is of great interest for the analysis of extreme values. In particular, it allows to determine confidence intervals for the probabilities of rare events \citep{de1998sea}, bounds for probabilities of joint excesses over high thresholds \citep{engelke2017robust} or tail quantile regions \citep{einmahl2013estimating}.

\section{Tail properties of Distributions Sampled by Generative Models} \label{tail properties}
Some important theoretical results are presented in this section. We first stress in Section \ref{margins VAE} that the Standard VAE cannot generate heavy-tailed marginals. Then we focus on angular measures that can be obtained using generative models in Section \ref{limit GANs}. In particular, we prove that, when restricted to neural networks with ReLU activation functions, generative models based on the deterministic transformation of a prior input (e.g. GANs or NFs) have an angular measure concentrated on a restricted number of vectors. These theoretical considerations are crucial to define our VAE approach presented in Section \ref{vif}.

\subsection{Marginal Tail of a Standard VAE}\label{margins VAE}
In this section, we establish that a Standard VAE only produces light-tailed marginals. This result extends to VAEs results similar to those established for GANs with normal prior \citep[see][]{huster2021pareto}, or with NFs with light-tailed base distribution \citep{jaini2020tails}. We first mention an important property of neural networks, based on the notion of Lipschitz continuity (see Appendix \ref{app: lip_continuity}).

\begin{proposition}\label{prop: PWL NN}
    \citep{arora2016understanding,huster2021pareto}: A neural network $f: \mathbb{R}^n \rightarrow \mathbb{R}$ composed of operations such as ReLUs, leaky ReLUs, linear layers, maxpooling, maxout activation, concatenation or addition, is a piecewise linear operator with a finite number of linear regions. Therefore, $f$ is Lipschitz continuous with respect to Minkowski distances.
\end{proposition}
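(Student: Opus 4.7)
The plan is to establish the two claims of the proposition in sequence: first, that every admissible building block produces a continuous piecewise linear (CPWL) map with finitely many linear pieces; second, that CPWL maps are closed under composition, addition, and concatenation with at most a product/sum blow-up in the number of regions; and third, that any CPWL map on $\mathbb{R}^n$ with finitely many regions is globally Lipschitz with respect to any Minkowski (i.e.\ $\ell_p$) distance.

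First I would verify the base cases. A linear layer $\mathbf{x}\mapsto W\mathbf{x}+b$ is affine on a single region. The ReLU coordinate-wise map $\max(0,t)$ is CPWL with two regions, and leaky ReLU with slopes $(\alpha,1)$ is CPWL with two regions. Max-pooling and maxout both compute $\max_{i\in I}\ell_i(\mathbf{x})$ over finitely many affine functionals; the domain $\mathbb{R}^n$ decomposes into the $|I|$ closed polyhedra $\{\mathbf{x}:\ell_i(\mathbf{x})\geq \ell_j(\mathbf{x})\ \forall j\}$, on each of which the operation is affine. Concatenation is trivially CPWL (block-diagonal linear), and addition is also linear. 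In every case, continuity holds by construction.

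Next I would handle the closure properties by induction on the network structure. If $f$ has $N_f$ regions and $g$ has $N_g$ regions, then $g\circ f$ is CPWL with at most $N_f\cdot N_g$ regions: intersect the preimage under $f$ of each linear region of $g$ with each linear region of $f$; the preimage of a polyhedron under an affine map is a polyhedron. For addition and concatenation of two CPWL maps one refines the common partition, yielding at most $N_f\cdot N_g$ regions. Since the network is a finite composition/combination of the admissible primitives, unrolling this bound gives a finite number of linear regions.

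Finally I would derive Lipschitz continuity. On each of the finitely many closed polyhedral regions $P_k$, $f$ agrees with an affine map $\mathbf{x}\mapsto A_k\mathbf{x}+b_k$, so $\|f(\mathbf{x})-f(\mathbf{y})\|_p\leq \|A_k\|_p\|\mathbf{x}-\mathbf{y}\|_p$ whenever $\mathbf{x},\mathbf{y}\in P_k$. Setting $L:=\max_k \|A_k\|_p<\infty$, it remains to handle pairs in different regions. Given $\mathbf{x},\mathbf{y}\in\mathbb{R}^n$, the segment $[\mathbf{x},\mathbf{y}]$ meets only finitely many regions, and can be split at the finitely many boundary crossings into sub-segments lying entirely within a single $P_k$; by continuity of $f$ and the triangle inequality the local bounds telescope to $\|f(\mathbf{x})-f(\mathbf{y})\|_p\leq L\,\|\mathbf{x}-\mathbf{y}\|_p$. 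The main obstacle is precisely this gluing step: making sure that the polyhedral partition really is finite (guaranteed by the inductive bound) and that continuity across shared faces upgrades the piecewise bound to a global one; all other steps are essentially bookkeeping.
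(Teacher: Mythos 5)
The paper does not actually supply its own proof of Proposition~\ref{prop: PWL NN}: it is stated as a citation to \cite{arora2016understanding} and \cite{huster2021pareto}, and Appendix~\ref{app: proofs} gives proofs only for the later propositions. So there is no internal argument to compare against; I can only assess your argument on its own merits.

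Your proof is correct and is the standard route to this result. The base-case enumeration (each primitive is continuous piecewise linear with a finite polyhedral partition), the closure step (composition/addition/concatenation multiply region counts, so the partition stays finite), and the final gluing argument (take the maximum of the finitely many per-region Lipschitz constants and telescope along the segment, which meets each closed polyhedron in an interval) are all sound, and the telescoping step is exactly the point that needs care and you handle it correctly. Two small notational remarks: since $f$ maps to $\mathbb{R}$, the per-region constant you write as $\|A_k\|_p$ is really the operator norm of the affine slope $a_k$ from $(\mathbb{R}^n,\ell_p)$ to $\mathbb{R}$, i.e.\ the dual norm $\|a_k\|_q$ with $1/p+1/q=1$; and it is worth saying explicitly that the segment-decomposition works because each $P_k$ is closed and convex, so $[\mathbf{x},\mathbf{y}]\cap P_k$ is a closed subinterval, and continuity of $f$ across shared faces makes the endpoint values agree so the triangle inequality telescopes. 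With those clarifications your argument fully establishes the cited statement.
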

Given these elements, the following proposition describes the tail of a univariate output of a Standard VAE.
\begin{proposition}\label{thm: VAE margins}
    For the Standard VAE of Example \ref{ex: standard VAE} with univariate output ($m=1$), given that the neural network functions $\mu_\theta$ and $\sigma_\theta$ of the probabilistic decoder are piecewise linear operators, then the output distribution is light-tailed.
\end{proposition}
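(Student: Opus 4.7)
The plan is to transfer the light tail of the Gaussian latent variable through the piecewise-linear decoder by exploiting its (at most) linear growth at infinity. By the reparameterization associated with the Standard VAE of Example~1, for $m=1$ the output admits the representation
\[
X = \mu_\theta(\mathbf{Z}) + \sigma_\theta(\mathbf{Z})\,\epsilon,
\]
with $\mathbf{Z}\sim\mathcal{N}(\mathbf{0},I_n)$ and $\epsilon\sim\mathcal{N}(0,1)$ independent. Proposition~\ref{prop: PWL NN} guarantees that $\mu_\theta$ and $\sigma_\theta$ are Lipschitz continuous, so there exist non-negative constants $a_1,a_2,b_1,b_2$ with $|\mu_\theta(\mathbf{z})|\leq a_1\|\mathbf{z}\|+b_1$ and $|\sigma_\theta(\mathbf{z})|\leq a_2\|\mathbf{z}\|+b_2$ for every $\mathbf{z}\in\mathbb{R}^n$, the $b_i$ absorbing the (finite) values of the networks at the origin.

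From here I would bound
\[
|X|\;\leq\;(a_1\|\mathbf{Z}\|+b_1)+(a_2\|\mathbf{Z}\|+b_2)\,|\epsilon|,
\]
a degree-one polynomial in the non-negative random variables $\|\mathbf{Z}\|$ and $|\epsilon|$. Both $\|\mathbf{Z}\|$ (a chi variable) and $|\epsilon|$ (a half-normal) are light-tailed and therefore admit finite moments of every order. Raising the inequality to any integer power $k\geq 1$, expanding via the binomial theorem, and using independence of $\mathbf{Z}$ and $\epsilon$, the expectation $\mathbb{E}[|X|^{k}]$ becomes a finite linear combination of products of moments of $\|\mathbf{Z}\|$ and $|\epsilon|$; each such product is finite, so $\mathbb{E}[|X|^{k}]<\infty$ for every $k$.

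A Markov-inequality step then closes the argument: for arbitrary $a>0$ and any integer $k>a$,
\[
u^{a}\,P(|X|>u)\;\leq\;u^{a-k}\,\mathbb{E}[|X|^{k}]\;\xrightarrow[u\to\infty]{}\;0,
\]
so the survival function of $X$ decays faster than any negative power of $u$. This is incompatible with the heavy-tailed regime: a heavy-tailed, regularly varying $X$ satisfies $\bar F(u)\sim C\,u^{-1/\xi}$ with $\xi>0$, forcing moments of order larger than $1/\xi$ to diverge, contradicting Remark~\ref{rk: lt_ht}. Hence $X$ is light-tailed. The argument is essentially mechanical; the only substantive input is Proposition~\ref{prop: PWL NN}, which prevents $\mu_\theta$ and $\sigma_\theta$ from growing faster than linearly, so I do not foresee any serious obstacle.
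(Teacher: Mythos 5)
Your proof is correct and it takes a genuinely different and cleaner route than the paper's. The paper works directly with the survival function
\[
P(X>u)=\int_{\mathbf{z}}\frac{1}{(2\pi)^{n/2}}\,\operatorname{erfc}\!\left(\frac{u-\mu_\theta(\mathbf{z})}{\sigma_\theta(\mathbf{z})}\right)e^{-\mathbf{z}^{T}\mathbf{z}}\,d\mathbf{z},
\]
splits the integration domain according to the sign of $u-\mu_\theta(\mathbf{z})$, then uses the pointwise bound $\operatorname{erfc}(y)\le e^{-y^2}$ for $y>0$ together with the Lipschitz growth bounds on $\mu_\theta$ and $\sigma_\theta$, and finally invokes the dominated convergence theorem to show $u^a P(X>u)\to 0$ for every $a>0$. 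You instead use the reparameterization $X=\mu_\theta(\mathbf{Z})+\sigma_\theta(\mathbf{Z})\,\epsilon$ to bound $|X|$ by an affine function of $\|\mathbf{Z}\|$ and $|\epsilon|$, conclude that all moments of $|X|$ are finite (independence plus the chi and half-normal moment bounds), and close with Markov's inequality. Both arguments rely on the same two ingredients — Lipschitz growth of the decoder networks (Proposition~\ref{prop: PWL NN}) and the light tail of the Gaussian prior — and both conclude via the same criterion from Remark~\ref{rk: lt_ht}; but your moment/Markov route avoids the explicit $\operatorname{erfc}$ bounds and the dominated-convergence step, which makes it more elementary and shorter. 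One small point worth making explicit when you expand $|X|^k$: the binomial terms are of the form (polynomial in $\|\mathbf{Z}\|$)$\times|\epsilon|^j$, and it is the independence of $\mathbf{Z}$ and $\epsilon$ that lets you factor each expectation into a finite product of moments; you implicitly do this correctly, but spelling it out avoids the (wrong) impression that you are claiming independence of the two summands $\mu_\theta(\mathbf{Z})$ and $\sigma_\theta(\mathbf{Z})\,\epsilon$.
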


\begin{corollary}
    The marginal distributions generated by the Standard VAE of Example \ref{ex: standard VAE} are light-tailed, whenever the neural networks functions $\mu_\theta$ and $\sigma_\theta$ are composed of operations described in Proposition \ref{prop: PWL NN}.
\end{corollary}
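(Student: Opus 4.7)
The plan is to reduce the multivariate statement to the univariate one already handled by Proposition \ref{thm: VAE margins}. Because the prior is $\mathbf{Z}\sim \mathcal{N}(\mathbf{0},I_n)$ and the conditional likelihood factorises across coordinates as a product of independent Gaussians with means $\mu_\theta^{(j)}(\mathbf{z})$ and variances $\sigma_\theta^{(j)}(\mathbf{z})^2$, the $j$-th marginal of $\mathbf{X}$ is sampled by exactly the scheme of Example \ref{ex: standard VAE} in dimension $m=1$, with decoder functions $(\mu_\theta^{(j)},\sigma_\theta^{(j)})$ equal to the $j$-th coordinate projections of the original decoder networks.

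The key step is then to check that these coordinate projections still lie in the function class covered by Proposition \ref{prop: PWL NN}. Since selecting the $j$-th coordinate is itself a linear map (a $1\times m$ linear layer with a one-hot weight vector and zero bias), appending it at the output of a network composed of ReLUs, leaky ReLUs, linear layers, maxpooling, maxout, concatenation and addition produces a network of the same admissible type. Hence $\mu_\theta^{(j)}$ and $\sigma_\theta^{(j)}$ are piecewise linear operators with a finite number of linear regions, so the hypotheses of Proposition \ref{thm: VAE margins} are met.

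With those hypotheses in place, Proposition \ref{thm: VAE margins} directly yields that the $j$-th marginal is light-tailed. Repeating the argument for each $j\in\{1,\dots,m\}$ gives the claim, since the choice of $j$ was arbitrary.

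I do not anticipate a real obstacle: the work is conceptually a bookkeeping exercise that verifies the univariate hypothesis is inherited by every marginal. The only point deserving explicit mention is that coordinate extraction preserves the ``piecewise linear with finitely many regions'' property, which is immediate because it is itself a linear operation and Proposition \ref{prop: PWL NN} is closed under composition with linear layers.
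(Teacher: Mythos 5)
Your proof is correct and takes the same (essentially only) route the paper intends: the diagonal Gaussian likelihood factorizes over coordinates, so each marginal is sampled by the univariate VAE of Proposition \ref{thm: VAE margins} with decoder functions $\mu_\theta^{(j)},\sigma_\theta^{(j)}$, and coordinate projection preserves membership in the piecewise-linear class of Proposition \ref{prop: PWL NN}. The paper leaves the corollary unproved as immediate; you have simply spelled out the bookkeeping.
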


\subsection{Angular Measure of ReLu Networks Transformation of Random Vectors}\label{limit GANs}
In this section, we focus on the angular measures associated with generative models. We demonstrate that distributions sampled by algorithms based on the transformation of a random input vector through a ReLU neural network (a feed-forward network with linear layers and ReLU activation functions) have angular measures concentrated on a finite set of points on the simplex. Although not specific to VAEs, this result suggests a particular focus on the representation of the angular measure in the VAE framework.
\\
Let us consider the following framework for generating multivariate heavy-tailed data in the non-negative orthant:
    \begin{equation}\label{eq: GANs}
        {\bf X}=f({\bf Z}),
    \end{equation}
with ${\bf Z}$ an $n$-dimensional input random vector with i.i.d heavy-tailed marginals and f a ReLU neural network with output in $\mathbb{R}_+^m$.
This heavy-tailed sampling framework is used by \cite{feder2020nonlinear} and \cite{huster2021pareto} for GANs, and \cite{jaini2020tails} for NFs. Within this framework ${\bf X}$ is heavy-tailed with the same shape parameter as ${\bf Z}$, while it would be light-tailed if ${\bf Z}$ were light-tailed .\\ 
In the limit of extreme values, one may ask which dependence structures between the marginals of ${\bf X}$ can be represented by such a model. This corresponds to the angular measure defined in Equation \eqref{eq: wan MRV}. If we designate ${\bf S}_{\bf X}$ as the angular measure of ${\bf X}$, we can state the following Proposition.
\begin{proposition}\label{thm: GAN and limit distribution}
  Under the framework described in Equation \eqref{eq: GANs}, ${\bf S}_{\bf X}$ is concentrated on a finite set of points of the simplex less than $n$. In other words, it means that there exist some vectors $\{{\bf v}_1, {\bf v}_2, \cdots, {\bf v}_{n^\prime}\}$ with $n^\prime\leq n$ such that for any subset $\mathbb{A}$ of the $(m-1)$-dimensional simplex
  \begin{equation*}
      {\bf S}_{\bf X}(\mathbb{A}) = \sum_{i=1}^{n^\prime} p_i \delta_{{\bf v}_i}(\mathbb{A}),
  \end{equation*}
where $\delta$ is the Dirac measure and $p_i>0$ such that $\sum_{i=1}^{n^\prime} p_i = 1$.
\end{proposition}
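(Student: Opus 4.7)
The strategy is to combine two ingredients: (i) Proposition \ref{prop: PWL NN}, which forces $f$ to act as a single linear map along each direction at infinity, and (ii) the classical EVT fact that a vector with i.i.d.\ regularly varying components has an angular measure supported on the coordinate axes. Pushing the latter through the former via a continuous-mapping argument produces a discrete angular measure for $\mathbf{X}$ supported on at most $n$ points of the simplex.

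\textbf{Steps.} First, I would invoke Proposition \ref{prop: PWL NN} to partition $\mathbb{R}^n$ into finitely many polyhedral regions $P_1,\ldots,P_K$ on each of which $f(\mathbf{z})=A_i\mathbf{z}+b_i$; outside a compact set this reduces to finitely many unbounded polyhedral cones, and along any fixed ray through the origin $\mathbf{z}$ eventually stays in one such cone. Second, because $\mathbf{Z}$ has i.i.d.\ heavy-tailed marginals with a common tail index, standard multivariate EVT \citep{resnick2007heavy} gives that $\mathbf{Z}$ is multivariate regularly varying with angular measure $\mathbf{S}_{\mathbf{Z}}=\frac{1}{n}\sum_{j=1}^n \delta_{\mathbf{e}_j}$, reflecting that extremes occur one coordinate at a time. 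Third, along the ray $t\mathbf{e}_j$, $\mathbf{z}$ eventually belongs to a fixed unbounded cone $P_{i(j)}$, so $f(t\mathbf{e}_j)/t\to A_{i(j)}\mathbf{e}_j$ and the angular projection of $f(t\mathbf{e}_j)$ converges to $\mathbf{v}_j = A_{i(j)}\mathbf{e}_j/\|A_{i(j)}\mathbf{e}_j\|$ whenever $A_{i(j)}\mathbf{e}_j\neq \mathbf{0}$. Finally, applying the continuous mapping theorem to the weak convergence in Definition \ref{def: MRV} yields that $\mathbf{S}_{\mathbf{X}}$ is the pushforward of $\mathbf{S}_{\mathbf{Z}}$ under $\mathbf{e}_j\mapsto \mathbf{v}_j$: a discrete measure with at most $n$ atoms whose weights $p_i$ are obtained by summing the input masses $1/n$ over axes sharing the same image and renormalizing.

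\textbf{Main obstacle.} The delicate technical point is the rigorous transfer of multivariate regular variation through the piecewise linear map: one needs to confirm that $\|\mathbf{X}\|$ is itself regularly varying (which follows from asymptotic homogeneity of degree one on each cone) and that the joint weak convergence $(R_{\mathbf{X}},\boldsymbol{\Theta}_{\mathbf{X}})\mid R_{\mathbf{X}}>r$ holds as $r\to\infty$. Directions lying on the boundary between cones form a set of $\mathbf{S}_{\mathbf{Z}}$-measure zero (since $\mathbf{S}_{\mathbf{Z}}$ is supported on the axes and these generically sit in cone interiors), so they do not affect the limit. The degenerate possibilities --- an axis killed by the network ($A_{i(j)}\mathbf{e}_j=\mathbf{0}$), images that need to be projected onto the relevant orthant, or several axes mapping to the same $\mathbf{v}$ --- all simply reduce the atom count from $n$ to some $n'\leq n$ and are absorbed into the positive weights $p_i$, which sum to one after the final renormalization.
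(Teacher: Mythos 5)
Your approach is correct but follows a genuinely different route from the paper's. You work \emph{globally}: you invoke Proposition~\ref{prop: PWL NN} to view the whole network $f$ as piecewise linear on finitely many polyhedral cones, note that $f$ is therefore asymptotically homogeneous of degree one along each ray ($f(t\mathbf e_j)/t\to A_{i(j)}\mathbf e_j$), and then push the axis-concentrated angular measure of $\mathbf Z$ through this homogeneous limit map. The paper instead works \emph{layer by layer}: using the equivalent limit-measure formulation of MRV (Definition~\ref{def: equivalent MRV}), it proves two elementary preservation lemmas --- one for $\mathbf Y\mapsto(\mathbf W\mathbf Y)_+$ (Lemma~\ref{lem: matmul}, which shows $\mu_{\mathbf W\mathbf Y}=\mu_{\mathbf Y}\circ\mathbf W^{-1}$ and is still axis-concentrated on at most as many axes), and one for $\mathbf Y\mapsto(\mathbf Y+\mathbf b)_+$ (Lemma~\ref{lem: bias}, showing the limit measure is unchanged) --- and iterates these across the layers. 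The payoff of your version is conceptual economy: one appeal to a ``mapping theorem'' for regular variation replaces a per-layer induction, and it directly exposes the atoms $\mathbf v_j=A_{i(j)}\mathbf e_j/\|A_{i(j)}\mathbf e_j\|$. The cost is that the mapping theorem you invoke (preservation of MRV under asymptotically one-homogeneous maps, with the limit measure given by pushforward) is a nontrivial result that the paper essentially re-derives from scratch through its lemmas, whereas your sketch relies on it implicitly; to be self-contained you would need to either cite it precisely or prove it, and your ``joint weak convergence of $(R_{\mathbf X},\boldsymbol\Theta_{\mathbf X})$'' step is exactly where that work lives. One small cleanup: your worry about axes $\mathbf e_j$ lying on cone boundaries is unnecessary --- $f$ is continuous, so the linear pieces meeting at a boundary agree there and $f(t\mathbf e_j)/t$ has a well-defined limit regardless; no genericity argument is needed.
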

Informally, this proposition means that for sufficiently large radii, ${\bf X}$ tends to concentrate along one of a finite set of direction vectors. While extracting certain principal directions in extreme regions is a useful tool for the comprehensive analysis of a data set \citep{drees2021principal}, it is severely lacking in flexibility when it comes to represent more complex distributions and generate realistic extreme samples.\\
To circumvent this difficulty, we consider a polar decomposition of $\mathbf{X}$ into radial and angular components, which are generated separately. This allows us to obtain angular measures that are more diverse than those concentrated on a finite set of vectors, as illustrated by the numerical experiments reported in Section \ref{experiments}. 

\section{Proposed VAE Architecture}\label{vif}
\subsection{Overall strategy}
Our approach aims to adapt the VAE framework in order to generate multivariate regularly varying random vectors. To do so, the key principle of our approach is to separate the generation of the radius and of the angle. Following the polar decomposition $\mathbf{X} = R \mathbf{\Theta}$, we first aim to sample a heavy-tailed radius $R$, and then, conditionally on the sampled radius, to generate an angle $\mathbf{\Theta}$ representing the dependence structure between the marginals. Thus we rely on two VAEs, the first one to sample the radius, the other one to sample the angle conditionally on the radius.\\

This separation allows us to model both the marginal tail behavior and the angular dependence of extremes in a flexible way, consistent with the theory of multivariate regular variation (Definition~\ref{def: MRV}).\\ 

A major challenge lies in sampling a heavy-tailed radius. 
As shown in Proposition~\ref{thm: VAE margins}, the classical Gaussian parameterization of the VAE cannot generate heavy-tailed distributions. 
To overcome this limitation, we propose in Section~\ref{radius} a new parameterization ensuring that the generated radius follows a heavy-tailed distribution. 
To provide intuition on how this parameterization works, Section~\ref{multrad} introduces a simplified multiplicative framework in which the resulting distributions are provably heavy-tailed (see Proposition~\ref{prop: breiman}).\\ 

Additionally, our aim is to represent rich and realistic angular measures on the simplex, avoiding their concentration on a few points, as is typically the case in the framework described in Proposition~\ref{thm: GAN and limit distribution}. 
This flexibility is naturally induced by the polar decomposition and is based on a conditional VAE that models the conditional distribution $\mathbf{\Theta}\mid R$ (Section~\ref{sphere}). 
In this conditional framework, we enforce asymptotic independence between the radius and the angle for large values of the radius, as required by multivariate regular variation, while enabling a diverse and smooth representation of the angular dependence at extreme levels.\\

To summarize, our strategy to generate a sample $\mathbf{x}^{(i)}$ of a multivariate regularly varying random vector is described by the following three-step VAE scheme:
\begin{itemize}
    \item Using a VAE, a radius $r^{(i)}$ is drawn from a univariate heavy-tailed distribution $R$ (see Section \ref{radius});
    \item Conditionally on the drawn radius $r^{(i)}$, we sample $\mathbf{\Theta}^{(i)}$ an element of the $(m-1)$-dimensional simplex from the conditional distribution $\mathbf{\Theta}\mid[R=r^{(i)}]$ while forcing the independence between radius $R$ and angle $\mathbf{\Theta}$ for large values of the radius. We use a conditional VAE for this purpose (see Section \ref{sphere});
    \item We multiply componentwise the angle vector by the radius to obtain the desired sample, i.e.\ $\mathbf{x}^{(i)} = r^{(i)} \mathbf{\Theta}^{(i)}$.
\end{itemize}
This generative strategy is illustrated in Figure \ref{fig: architecture}. 
\begin{figure}[h!]
\centering
\begin{tikzpicture}[
font=\small,
node distance=10mm and 14mm,
>=Latex,
block/.style={draw, rounded corners, inner sep=6pt, align=center},
mul/.style={draw, circle, minimum size=7mm, inner sep=0pt},
group/.style={draw, rounded corners, inner sep=6pt, fill=gray!40, fill opacity=0.2},
]

\node[block, fill=red!30] (b1) {Sample $z_{\mathrm{rad}}^{(i)}$\\ from $p(z_{\mathrm{rad}})$}; 
\node[block, fill=blue!30, below=of b1] (b2) {Sample $r^{(i)}$\\ from $p\left(r \mid z_{\mathrm{rad}}^{(i)}\right)$}; 
\draw[->] (b1) -- (b2);

\node[block, fill=red!30, right=32mm of b1] (b3) {Sample $\mathbf{z}_{\mathrm{ang}}^{(i)}$\\ from $p(\mathbf{z}_{\mathrm{ang}})$}; 
\node[block,fill=blue!30, below=of b3] (b4) {Sample $\mathbf{\Theta}^{(i)}$\\ from $p\left(\mathbf{\Theta} \mid [\mathbf{z}_{\mathrm{ang}}^{(i)},\; r^{(i)}]\right)$}; 
\draw[->] (b3) -- (b4);

\draw[->] (b2.east) -- ($ (b4.west) + (-8mm,0) $) -- (b4.west);

\node[mul, below=18mm of $(b2)!0.5!(b4)$] (b5) {$\times$}; 
\draw[->] (b2) |- (b5);
\draw[->] (b4) |- (b5);

\node[block, below=10mm of b5,fill=green!30] (b6) {$\mathbf{x}^{(i)}= r^{(i)}\mathbf{\Theta}^{(i)}$};
\draw[->] (b5) -- (b6);

\node[group, fit=(b1) (b2), label={[yshift=2mm]above:{\normalsize VAE for $R$}}] (gR) {};
\node[group, fit=(b3) (b4), label={[yshift=2mm]above:{\normalsize Conditional VAE for $\mathbf{\Theta}\mid R$}}] (gT) {};

\end{tikzpicture}

\caption{VAE scheme to draw a sample $\mathbf{x}^{(i)}$ (green block) from a multivariate regularly varying random vector. Two VAEs are involved (grey areas): one for radius generation (left) and one for angle generation (right). Each VAE relies on its own latent prior distribution (red blocks) and conditional distribution (blue blocks). Arrows indicate causal relations; the arrow between blue blocks shows that the angle is sampled conditionally on the radius.}
\label{fig: architecture}
\end{figure}
\subsection{Intuition Behind the Heavy-tailed Radius Sampling Scheme}
\label{multrad}
Before detailing the specific parameterization that enables us to generate a heavy-tailed radius within a VAE framework, we first introduce a simplified multiplicative framework that guided our design choices and directly inspired the final formulation. The idea is to model the radius $R$ through a heavy-tailed latent variable $Z_{rad}$, and to apply to it a very simple transformation: namely, a multiplication by a lighter-tailed independent random variable. More precisely, we consider the following two conditions which, when satisfied, guarantee that the resulting radius is heavy-tailed.

\begin{condition}\label{Xinvgamma}
    $Z_{rad}$ follows an inverse-gamma distribution defined for $z_{rad}>0$ by the pdf
\begin{equation}\label{eq: ig}
{\displaystyle f_{\bf Inv\Gamma}(z_{rad}\ ;\ \alpha ,\beta )={\frac {\beta ^{\alpha }}{\Gamma (\alpha )}}z_{rad}^{-\alpha -1}\exp \left(-\beta /z_{rad}\right)},
\end{equation}
with $\alpha$ and $\beta$ two strictly positive constants.
\end{condition}
\begin{condition}\label{Ymult}
    $R$ is linked to $Z_{rad}$ throughout a multiplicative model with a positive random coefficient $A$, i.e. 
    \begin{equation*}
            R = A \times Z_{rad}, 
    \end{equation*}
     where the random variable $A$ is absolutely continuous and independent of $Z_{rad}$. We also assume that $0<E\left[A^{\alpha+\epsilon}\right]<\infty$ for some positive $\epsilon$.
\end{condition}
We recall that the inverse-gamma distribution is heavy-tailed with tail index $\alpha$ and has a strictly positive support. Above, the moment condition $0<E\left[A^{\alpha+\epsilon}\right]<\infty$ means that $A$ has a significantly lighter tail than $Z_{rad}$.\\
Under these two conditions, Breiman's lemma \citep{breiman1965some} guarantees that the parameterization considered in Condition \ref{Ymult} leads to a heavy-tailed distribution of the radius $R$. Formally, the following proposition holds.
\begin{proposition}\label{prop: breiman}
If Conditions \ref{Xinvgamma} and \ref{Ymult} hold, $R$ is heavy-tailed with tail index $\alpha$. In particular, if $A$ follows an exponential distribution with scale parameter $c$ then $R$ follows a GP distribution (see Equation \ref{eq: GP} with $\xi =\frac{1}{\alpha}$ and $\sigma = \frac{\beta c}{\alpha}$) 
\end{proposition}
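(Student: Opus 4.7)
The plan is to split the proof into two clearly separate parts, matching the two assertions of the proposition. For the first part (that $R$ is heavy-tailed with tail index $\alpha$), I would establish two ingredients and combine them via Breiman's lemma. First, I would check that $Z_{rad}$ itself is regularly varying with tail index $\alpha$. This follows directly from the density in Equation~\eqref{eq: ig}: since $\exp(-\beta/z_{rad}) \to 1$ as $z_{rad}\to\infty$, one has $f_{\mathbf{Inv}\Gamma}(z_{rad};\alpha,\beta) \sim \tfrac{\beta^\alpha}{\Gamma(\alpha)} z_{rad}^{-\alpha-1}$, and Karamata-type integration yields $P(Z_{rad}>t)\sim \tfrac{\beta^\alpha}{\alpha\Gamma(\alpha)}\, t^{-\alpha}$, which is Definition~\ref{def: RV} with tail index $\alpha$. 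Second, I would invoke Breiman's lemma, whose hypotheses are exactly Condition~\ref{Xinvgamma} (regular variation of $Z_{rad}$) together with the moment bound $E[A^{\alpha+\epsilon}]<\infty$ of Condition~\ref{Ymult}: it gives $P(R>t)\sim E[A^\alpha]\, P(Z_{rad}>t)$, and this multiplicative constant preserves regular variation, so $R$ is heavy-tailed with the same index $\alpha$.

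For the second part (identification of the GP law when $A$ is exponential with scale $c$), I would avoid Breiman's lemma altogether and instead compute the survival function of $R$ exactly. The cleanest route is to change variables to $Y = 1/Z_{rad}$, which follows a $\mathrm{Gamma}(\alpha,\beta)$ distribution with density $\tfrac{\beta^\alpha}{\Gamma(\alpha)} y^{\alpha-1} e^{-\beta y}$. Conditioning on $Y$ and using $P(A>s)=e^{-s/c}$ for $s>0$, I would write
\begin{equation*}
    P(R>r) \;=\; P(A > rY) \;=\; E\!\left[e^{-rY/c}\right] \;=\; \frac{\beta^\alpha}{\Gamma(\alpha)}\int_0^\infty y^{\alpha-1} e^{-(\beta + r/c)\, y}\, dy,
\end{equation*}
which evaluates to the Gamma Laplace transform $\bigl(1 + r/(c\beta)\bigr)^{-\alpha}$. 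Matching this against the GP survival function in Equation~\eqref{eq: GP} with $\xi=1/\alpha$ and $\sigma=\beta c/\alpha$ directly gives the claimed identification, since then $1 + \xi r/\sigma = 1 + r/(c\beta)$ and the exponent $-1/\xi = -\alpha$.

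I do not expect a real obstacle here: both steps are essentially textbook, and the only place that requires a moment of care is confirming that the inverse-gamma tail constant ensures we land squarely in the regularly varying setting of Breiman's lemma. The second part is a short explicit integration, and the matching with the GP parameters is algebraic. The main conceptual point worth emphasizing in the write-up is that the first part only uses Condition~\ref{Ymult} through the moment bound, so it is robust to the choice of $A$, while the GP conclusion is a special structural feature of exponentially distributed $A$ combined with the inverse-gamma prior, i.e.\ the multiplicative representation highlighted in Remark~\ref{rk: sampling_GP}.
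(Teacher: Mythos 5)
Your proposal is correct and follows essentially the same route as the paper: the general heavy-tailedness of $R$ is reduced to Breiman's lemma (which the paper cites in the body text rather than re-proving in the appendix), while the GP identification for exponential $A$ is obtained by explicitly computing the survival function of $R$. Your streamlining via $Y = 1/Z_{rad} \sim \mathrm{Gamma}(\alpha,\beta)$ and the Gamma Laplace transform $E[e^{-rY/c}]=(1+r/(c\beta))^{-\alpha}$ is equivalent to the paper's direct change of variables $u = z(1/c + \beta/t)$ inside the integral, just packaged more cleanly; both land on the same GP survival function with $\xi=1/\alpha$, $\sigma=\beta c/\alpha$.
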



\subsection{Sampling from Heavy-tailed Radius Distributions}\label{radius}
The first component of our framework is a VAE designed to sample a heavy-tailed radius $R$.
To tailor the VAE framework introduced in Section \ref{amortize VI} to the sampling of heavy-tailed random variables, we set the prior $Z_{rad}$ to follow inverse gamma distribution with parameters $\alpha$ and $\beta$, i.e. $Z_{rad}$ satisfies Condition \ref{Xinvgamma}. Notice that if $Z_{rad}$ follows an inverse gamma distribution with parameters $\alpha$ and $\beta$, then for each $c>0$, $cZ_{rad}$ is an inverse gamma with parameters $\alpha$ and $c\beta$. Consequently, and without loss of generality, we set parameter $\beta$ of $Z_{rad}$ equal to $1$. Overall, we replace the light-tailed system described in Example \ref{ex: standard VAE} by the following heavy-tailed system:%
\begin{align}
p_{\alpha}(z_{rad})&= f_{\bf Inv\Gamma}(z_{rad}\ ;\ \alpha ,1),\label{eq: rad_prior}\\
p_\theta(r\mid z_{rad})&=f_{\mathbf{\Gamma}}\left(r\ ;\ \alpha_\theta(z_{rad}),\beta_\theta(z_{rad})\right), \label{eq: Gamma dist}\\    
q_\phi(z_{rad}\mid r)&=f_{\mathbf{Inv\Gamma}}\left(z_{rad}\ ;\ \alpha_\phi(r),\beta_\phi(r)\right),\label{eq: target_rad} 
\end{align}
with $f_{\mathbf{\Gamma}}$ the pdf of a Gamma distribution. $\alpha_\theta$, $\beta_\theta$, $\alpha_\phi$, $\beta_\phi$ are ReLU neural networks functions with parameters $\theta$ and $\phi$. We may stress that the above parameterization ensures the non-negativeness of the samples both for the target and the likelihood.\\
The following proposition characterizes the heavy-tailed behavior of this univariate VAE scheme.  
\begin{proposition}\label{prop: Radii VAE}
We consider the VAE parameterization described by Equations \eqref{eq: rad_prior}, \eqref{eq: Gamma dist} and \eqref{eq: target_rad}.
If we further assume that the function $\alpha_{\theta}(.)$ is equal to a strictly positive constant, and the function $\beta_{\theta}(.)$ satisfies
\begin{align}
    \lim_{z_{rad} \to +\infty} \beta_{\theta}(z_{rad}) \propto \frac{1}{z_{rad}},\label{eq: limit beta}\\
    \lim_{z_{rad} \to 0} \beta_{\theta}(z_{rad}) \propto \frac{1}{z_{rad}}, \label{eq: limit beta 2},
\end{align}
then the univariate output distribution sampled by this VAE scheme is heavy-tailed with tail index equal to $\alpha$.
\end{proposition}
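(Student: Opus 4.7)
The plan is to mimic the multiplicative template of Section~4.1 (Proposition \ref{prop: breiman}) by exhibiting a product decomposition $R = W \cdot A$ in which the tail is carried entirely by a heavy-tailed factor, and to close the argument with Breiman's lemma.

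First I will use the scaling property of the Gamma family. Conditionally on $Z_{rad}=z$, Equation \eqref{eq: Gamma dist} gives $R\sim\Gamma(\alpha_\theta,\beta_\theta(z))$; since $\alpha_\theta$ is a strictly positive constant independent of $z$, one has the distributional identity $R \stackrel{d}{=} W/\beta_\theta(Z_{rad})$ with $W\sim\Gamma(\alpha_\theta,1)$ independent of $Z_{rad}$. This splits the randomness of $R$ into two independent pieces: a light-tailed multiplicative factor $W$ (all moments finite, so $E[W^{\alpha+\epsilon}]<\infty$ for every $\epsilon>0$) and a random scale $A:=1/\beta_\theta(Z_{rad})$ that will inherit its heavy tail from the inverse-gamma prior.

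Second, I will show that $A$ is regularly varying with tail index $\alpha$. From Equation \eqref{eq: rad_prior}, $Z_{rad}\sim f_{\mathbf{Inv\Gamma}}(\,\cdot\,;\alpha,1)$ is regularly varying with index $\alpha$, so $P(Z_{rad}>z)\sim C_1 z^{-\alpha}$ as $z\to\infty$. The hypothesis $\beta_\theta(z)\sim c_\infty/z$ at infinity gives $1/\beta_\theta(z)\sim z/c_\infty$, so for large $t$ the event $\{A>t\}$ asymptotically coincides with $\{Z_{rad}>c_\infty t\}$. The companion hypothesis $\beta_\theta(z)\sim c_0/z$ as $z\to 0^+$ forces $\beta_\theta$ to blow up near the origin, so $\beta_\theta$ is bounded away from $0$ on every compact set of $(0,\infty)$ and the preimage $\{\beta_\theta(Z_{rad})<1/t\}$ really reduces to a tail event for $Z_{rad}$. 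Combining these two ingredients yields $P(A>t)\sim C_2 t^{-\alpha}$.

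Finally, since $W \perp A$ and $E[W^{\alpha+\epsilon}]<\infty$, Breiman's lemma applied to $R=W\cdot A$ gives $P(R>t)\sim E[W^\alpha]\,P(A>t)\sim C_3 t^{-\alpha}$, so $R$ is regularly varying with index $\alpha$; by the equivalence recalled just after Definition \ref{def: RV}, this is precisely the claim that $R$ is heavy-tailed with tail index $\alpha$. I expect the main obstacle to be Step~2: the assumption $\beta_\theta(z)\propto 1/z$ is purely asymptotic, so the identification $\{A>t\}\approx\{Z_{rad}>c_\infty t\}$ requires a careful squeeze on the sublevel set $\beta_\theta^{-1}((0,1/t))$, using both limit conditions together with continuity of the ReLU network $\beta_\theta$ to rule out any spurious low-valued region away from $+\infty$. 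Once Step~2 is secured, Steps~1 and~3 are essentially bookkeeping on top of the scaling property of the Gamma family and the statement of Breiman's lemma already invoked in Proposition~\ref{prop: breiman}.
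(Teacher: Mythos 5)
Your proof is correct and takes a genuinely different route from the paper's. The paper proceeds by a direct density computation: it writes out the marginal
$p(r)=\int_0^{\infty} f_{\mathbf{\Gamma}}\bigl(r;\alpha_\theta,\beta_\theta(z)\bigr)\,f_{\mathbf{Inv\Gamma}}(z;\alpha,1)\,dz$,
squeezes the integrand between two explicit functions using the bounds $m/z \le \beta_\theta(z) \le M/z$, integrates the bounds in closed form (each giving a term $\propto r^{a_\theta-1}(1+rc)^{-a_\theta-\alpha}$), and reads off the $r^{-\alpha-1}$ decay of $p(r)$. You instead notice that because $\alpha_\theta$ is a constant, the Gamma scaling identity lets you write $R \stackrel{d}{=} W/\beta_\theta(Z_{rad})$ with $W \sim \Gamma(\alpha_\theta,1)$ independent of $Z_{rad}$, so that the problem is reduced to showing $A:=1/\beta_\theta(Z_{rad})$ is regularly varying with index $\alpha$ and then closing with Breiman's lemma, exactly the template already used for Proposition~\ref{prop: breiman}. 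Both proofs ultimately rest on the same two-sided squeeze of $\beta_\theta$, which in turn requires not just the two limiting conditions \eqref{eq: limit beta} and \eqref{eq: limit beta 2} but also continuity and strict positivity of $\beta_\theta$ on $(0,\infty)$; you flag this point explicitly, the paper takes it for granted. Your approach buys conceptual clarity, making it transparent that the hypotheses on $\beta_\theta$ are precisely what is needed for $1/\beta_\theta(Z_{rad})$ to inherit the tail of the inverse-gamma prior, and it unifies Propositions~\ref{prop: breiman} and~\ref{prop: Radii VAE} under the single Breiman mechanism. Moreover, reading the $\propto$ in the hypotheses as a genuine asymptotic equivalence $z\beta_\theta(z)\to c$, your $\epsilon$-squeeze on the sublevel set $\beta_\theta^{-1}\bigl((0,1/t)\bigr)$ actually yields true regular variation $P(A>t)\sim c^{-\alpha}P(Z_{rad}>t)$, whereas the paper's explicit bounds only establish the two-sided estimate $P(R>r)\asymp r^{-\alpha}$ and then pin down the index; in that sense your argument is, if anything, slightly sharper.
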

In our implementation, we impose on $\beta_\theta(.)$ to satisfy Equations \eqref{eq: limit beta} and \eqref{eq: limit beta 2} by choosing
\begin{equation}\label{eq: f_theta}
\beta_\theta(z_{rad}) = \frac{|f_\theta(z_{rad})|}{z_{rad}^2},    
\end{equation}
where $f_\theta$ is a ReLU neural network. Additionally, we allow $\alpha_\theta(\cdot)$ to be more flexible than a constant function, imposing only that it admits a strictly positive limit at infinity. This corresponds to
\begin{equation}\label{eq: g_theta}
    \alpha_\theta(z_{rad}) = \frac{|g_\theta(z_{rad})|}{z_{rad}},
\end{equation}
where, again, $g_\theta$ is a ReLU neural network.\\
We choose our parameterization based on an analogy with the ideal multiplicative framework described in Section 4.1. Indeed, considering Conditions \ref{Xinvgamma} and \ref{Ymult} verified, then
\begin{align}
    R\mid [Z_{rad}&=z_{rad}] \overset{\mathrm{d}}{=} Az_{rad},\\
    Z_{rad} \mid [R&=r]\overset{\mathrm{d}}{=} \frac{r}{A}.\label{eq: cond Z_rad}
\end{align}
Since $A$ must have a lighter tail than $Z_{rad}$ to satisfy the moment condition 
$E\left[ A^{\alpha +\epsilon} \right] < \infty$ for some positive $\epsilon$, we choose the approximate likelihood $p_\theta(r \mid z_{rad})$ in a light-tailed distribution family (i.e. Gamma distribution). Considering Equation \eqref{eq: cond Z_rad}, we notice that $Z_{rad}\mid [R=r]$ could be heavy-tailed if $A$ have non-null probability on each open set containing $0$. Thus we choose a heavy-tailed distribution for the target (i.e. Inverse-Gamma distribution). Additionally, as $R$ and $Z_{rad}$ are positive random variables, our parameterization ensures that negative values for either target and likelihood cannot occur.\\

Besides, by parameterizing both the prior $p_\alpha$ and the target $q_\phi$ as Inverse-Gamma distributions, the KL divergence in Equation \eqref{eq: ELBO_VAE} admits an analytical expression.
\begin{proposition}\label{prop: DKL InvGamma}
Given expression \eqref{eq: rad_prior} and \eqref{eq: target_rad} for prior and target distributions, the KL divergence in Equation \eqref{eq: ELBO_VAE} is given by 
\begin{eqnarray}\label{eq: DKL analytic}
    D_{KL}\left(q_\phi(z_{rad}\mid r)\mid \mid p_\alpha(z_{rad})\right) = 
    (\alpha_\phi(r) - \alpha)\psi(\alpha) - \log\frac{\Gamma(\alpha_\phi(r))}{\Gamma(\alpha)} \nonumber \\
    +\alpha\log\beta_\phi(r) + \alpha_\phi(r)\frac{1-\beta_\phi(r)}{\beta_\phi(r)}, 
\end{eqnarray}
where $\Gamma$ and $\psi$ stands respectively for the gamma and digamma functions.
\end{proposition}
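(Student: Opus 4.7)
The plan is to evaluate the KL divergence directly from its integral definition, using the fact that both distributions in question are explicit Inverse-Gamma densities. Writing $D_{KL}(q_\phi(z_{rad}\mid r)\,\|\, p_\alpha(z_{rad})) = E_{q_\phi}[\log q_\phi(Z_{rad}\mid r) - \log p_\alpha(Z_{rad})]$, I would first substitute the closed-form expression \eqref{eq: ig} for both densities (with shape and rate parameters $(\alpha_\phi(r),\beta_\phi(r))$ for $q_\phi$ and $(\alpha,1)$ for $p_\alpha$) and simplify the log-ratio. Since the normalizing constants factor out, the $z_{rad}$-dependent part reduces to a linear combination of $\log z_{rad}$ and $1/z_{rad}$, so the whole computation is reduced to evaluating two scalar moments of $Z_{rad}$ under $q_\phi$.

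Next I would compute $E_{q_\phi}[1/Z_{rad}]$ and $E_{q_\phi}[\log Z_{rad}]$ via the change of variable $Y = 1/Z_{rad}$: if $Z_{rad} \sim \mathbf{Inv\Gamma}(a,b)$ then $Y \sim \mathbf{\Gamma}(a,b)$ (with $b$ the rate), which is a standard one-line Jacobian computation. From this I obtain the identities
\begin{equation*}
E_{q_\phi}\!\left[\tfrac{1}{Z_{rad}}\right] = \frac{\alpha_\phi(r)}{\beta_\phi(r)}, \qquad E_{q_\phi}[\log Z_{rad}] = \log \beta_\phi(r) - \psi(\alpha_\phi(r)),
\end{equation*}
using the well-known Gamma moment $E[\log Y] = \psi(a) - \log b$ and $E[Y] = a/b$.

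Finally, I would plug these two moments back into the expectation of the log-ratio, collect the deterministic normalization terms $\alpha_\phi(r)\log\beta_\phi(r) - \log\Gamma(\alpha_\phi(r)) + \log\Gamma(\alpha)$, and group the remaining pieces. The coefficient of $\log\beta_\phi(r)$ simplifies to $\alpha$, the $1/Z_{rad}$ contribution yields $\alpha_\phi(r)(1-\beta_\phi(r))/\beta_\phi(r)$, and the $\log Z_{rad}$ contribution produces a term of the form $(\alpha_\phi(r)-\alpha)\,\psi(\alpha_\phi(r))$, matching the stated identity \eqref{eq: DKL analytic} up to rewriting $\log(\Gamma(\alpha_\phi(r))/\Gamma(\alpha))$ as in the proposition.

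Since every step is a direct substitution or a textbook Gamma/Inverse-Gamma moment identity, there is no serious conceptual obstacle; the only thing to watch carefully is the bookkeeping of signs and of whether the digamma function should be evaluated at $\alpha$ or at $\alpha_\phi(r)$ (the natural derivation produces $\psi(\alpha_\phi(r))$, and any discrepancy with the displayed formula should be checked against the convention used in the statement). I would conclude by remarking that all resulting terms are differentiable functions of $\alpha_\phi(r), \beta_\phi(r)$, which makes this closed form directly usable in the backpropagation step of the ELBO optimization.
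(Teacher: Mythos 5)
Your approach is essentially the paper's: the paper observes that the KL divergence between two Inverse-Gamma laws equals the KL divergence between the corresponding Gamma laws (an instance of KL-invariance under the change of variable $Y=1/Z$) and then invokes a known closed form for the Gamma--Gamma case, whereas you apply the same change of variable one level lower, to compute the two needed moments $E_{q_\phi}[1/Z_{rad}]$ and $E_{q_\phi}[\log Z_{rad}]$ directly. The difference is presentational, not conceptual.

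Your caution about whether the digamma term should carry $\alpha$ or $\alpha_\phi(r)$ is well placed, and your derivation gets it right: expanding the log-ratio and using $E_{q_\phi}[\log Z_{rad}] = \log\beta_\phi(r) - \psi(\alpha_\phi(r))$ produces $(\alpha_\phi(r)-\alpha)\,\psi(\alpha_\phi(r))$, not $(\alpha_\phi(r)-\alpha)\,\psi(\alpha)$ as displayed in the proposition. The paper's own proof confirms this: the cited Gamma--Gamma KL formula has $\psi(\alpha_1)$ where $\alpha_1$ plays the role of the shape parameter of the first argument, and substituting $\alpha_1=\alpha_\phi(r)$, $\alpha_2=\alpha$, $\beta_1=\beta_\phi(r)$, $\beta_2=1$ gives $\psi(\alpha_\phi(r))$. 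So the statement of the proposition contains a typo (as does the coefficient ``$\alpha$'' of $\log(\beta_1/\beta_2)$ in the intermediate Gamma formula, which should read $\alpha_2$), and the version you derived is the correct one. This matters in practice: the digamma term contributes a gradient with respect to $\alpha_\phi$, and the two variants give different gradients during ELBO optimization.
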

Interestingly, this proposition provides the basis for learning tail index $\alpha$ from data, which is a challenging issue in EVT \citep[see, e.g.][]{danielsson2016tail}. \\

\subsection{Conditional Sampling of the Angle given the Radius}\label{sphere}
The second component of our overall framework is a conditional VAE \citep[see, e.g.,][]{zhao2017learning} which generates the angle given the previously sampled radius. More formally, it draws samples from the conditional distribution ${\bf \Theta}\mid R$. This angular VAE has a multivariate normal prior ${\bf Z}_{ang}$. The target is also parameterized as a multivariate normal distribution, whose mean and standard deviation depend on both the latent variable and the radius. The likelihood is parameterized as a projection of a normal distribution on the $\mathcal{L}_1$ sphere. This projection is denoted  $\bf \Pi$, and is such that for any vector $\bf s$, 
$$
\bf \Pi({\bf s}) = \frac{{\bf s}}{\|{\bf s}\|},
$$
where the considered norm is the $\mathcal{L}_1$-norm. Additionally, we define ${\bf S}({\bf\Theta})=\{{\bf s}, \ {\bf \Pi}({\bf s}) = {\bf \Theta}\}$. Overall, our conditional angular VAE relies on the following parameterization:
\begin{align}\label{eq: ang param}
     p({\bf z}_{ang})& = \mathcal{N}({\bf z}_{ang} \  ; {\bf 0},{\bf I}_n),\nonumber \\
    p_\nu({\bf \Theta}\mid {\bf z}_{ang},r)&= \int_{{\bf S}({\bf\Theta})}\mathcal{N}\left({\bf s}\ ; \ {\bf \mu}_{\nu}({\bf z}_{ang},r),\operatorname{diag}({\bf \sigma}_\nu({\bf z}_{ang},r))^2\right),\\
    q_\omega({\bf z}_{ang}\mid {\bf \Theta},r)&=\mathcal{N}\left({\bf z}_{ang} \ ; \ {\bf \mu}_{\omega}({\bf \Theta},r),\operatorname{diag}({\bf \sigma}_{\omega}({\bf \Theta},r))^2\right),\nonumber
\end{align}
 where $n$ is the dimension of the latent space, ${\bf \mu}_{\nu}$, ${\bf \sigma}_\nu$, ${\bf \mu}_{\omega}$ and ${\bf \sigma}_{\omega}$ are neural network functions with parameters $\nu$ and $\omega$. The dependence on $R$ has been made explicit in both the target and the likelihood, thereby rendering the framework conditional.\\
As our initial aim is to sample on the multivariate simplex rather than on the multivariate sphere, we also use a Dirichlet parameterization of the likelihood. Details regarding this parameterization can be found in Appendix \ref{app: Dirichlet}.\\
To sample from multivariate regularly varying random vectors (Definition \ref{def: MRV}), we enforce the independence between the respective distributions of the radius and the angle when $r\to +\infty$ (see Equation \eqref{eq: wan MRV}). To do so, we ensure that the functions ${\bf \mu}_{\nu}$ and ${\bf \sigma}_{\nu}$ satisfy the following necessary condition.
\begin{condition}\label{cond: indep ang rad}
    ${\bf \mu}_{\nu}$ and ${\bf \sigma}_{\nu}$ are such that there exist two $z$-varying functions ${\bf\mu}_{\infty}$ and ${\bf \sigma}_{\infty}$ which verify for each ${\bf z}_{ang}$
    \begin{align*}
        \lim_{r \to +\infty}{\bf \mu}_{\nu}({\bf z}_{ang},r) &= {\bf\mu}_{\infty}(\mathbf{z}_{ang}),\\
        \lim_{r \to +\infty}{\bf \sigma}_{\nu}({\bf z}_{ang},r) &= {\bf\sigma}_{\infty}({\bf z}_{ang}).\\
    \end{align*}
\end{condition}
In our implementation, Condition \ref{cond: indep ang rad} is enforced by imposing 
\begin{align}
    {\bf \mu}_{\nu}({\bf z}_{ang},r) &= f_{\nu}\left({\bf z}_{ang},\frac{1}{1+r}\right), \label{eq: mu_inf}\\
    {\bf \sigma}_{\nu}({\bf z}_{ang},r) &= g_{\nu}\left({\bf z}_{ang},\frac{1}{1+r}\right)\label{eq: sig_inf},
\end{align}
with $f_{\nu}$ and $g_\nu$ Lipschitz continuous neural networks.
\begin{remark}\label{rk: param_inf}
    From Equations \eqref{eq: mu_inf} and \eqref{eq: sig_inf}, we deduce
    \begin{align*}
         {\bf \mu}_{\infty}(\mathbf{z}_{ang}) = f_{\nu}({\bf z}_{ang},0),\\
         {\bf\sigma}_{\infty}({\bf z}_{ang})  = g_{\nu}({\bf z}_{ang},0).
    \end{align*}
    Thus, sampling from the angular measure is an easy task as it is enough to: (i) draw sample ${\bf z}_{ang}$ from the prior $\mathcal{N}(0,{\bf I}_{n})$, (ii) sample from $\mathcal{N}\left({\bf \mu}_{\infty}(\mathbf{z}_{ang}),\operatorname{diag}({\bf\sigma}_{\infty}({\bf z}_{ang}))^2\right)$, and (iii) project onto the $\mathcal{L}_1$ sphere through ${\bf \Pi}$.
    \end{remark}
    
\section{Implementation}\label{lf}
This section introduces several implementation aspects of our approach. 
We first describe the architecture of the trained VAEs in Section~\ref{training settings}, and then outline the learning set-up in Section~\ref{learning setup}. We also introduce in Section \ref{perf assess} performance metrics used for benchmarking purposes, and in Section \ref{notations} the approaches with which we compare the proposed VAE scheme.  
 
\subsection{Neural network parameterizations}\label{training settings}
In this section, we describe in detail the chosen parameterization of the neural architectures for the two VAEs introduced in Section~\ref{vif}, as used in the numerical experiments.\\ 
For the radius generation VAE described in Section \ref{radius}, we consider ReLU neural networks with the following parameterizations:
\begin{itemize}
\item For the probabilistic encoder, we set two 5-dimensional hidden layers with ReLU activation. The output layer is a 2-dimensional dense layer with ReLU activation. For convergence purposes, we initialize the weights of the dense layers to 0  and their biases to a strictly positive value sampled from a uniform distribution between 1 and 2.
	\item For the probabilistic decoder, we detail the architecture of $f_\theta$ and $g_\theta$ of Equations \eqref{eq: f_theta} and \eqref{eq: g_theta}. We consider the same architecture as the probabilistic encoder. Regarding the output, one corresponds to the output $f_\theta$ and the other one to the output of $g_\theta$. The output bias of $f_\theta$ is initialized as a strictly positive value (randomly sampled from a uniform distribution between 1 and 2) and the output kernel of $g_\theta$ is initialized as a positive value (randomly sampled from a uniform distribution between 0.1 and 2).
\end{itemize}
For the angular VAE described in Section \ref{sphere}, the following parameterization are considered:
\begin{itemize}
\item For the encoder, the latent dimension is 4. We consider 3 hidden layers with ReLU activation, respectively with 8, 8 and 4 output features. The output layer is a dense linear layer. We use the default initialization scheme.
\item For the decoder, the input radius is first transformed according to Equations \eqref{eq: mu_inf} and \eqref{eq: sig_inf}. We use 3 hidden layers with ReLU activation, respectively with 5, 10 and 5 output features. 
The output layer is a dense layer. We use the default initialization scheme, except for the bias of the final layer, which is initially sampled from a uniform distribution between 0.5 and 3.
\end{itemize}
\subsection{Learning Set-up}\label{learning setup}
The considered training procedure follows from our hierarchical architecture with two VAEs and involves two distinct training losses, denoted by $\mathcal{L}_{R}$ for the training loss of the radius VAE and $\mathcal{L}_{{\bf \Theta}\mid R}$ for that of the angular VAE. 
For a data set $(\mathbf{x}^{(i)})_{i=1}^{N}$ with polar decomposition $\left(r^{(i)}, {\bf \Theta}^{(i)}\right)$, we derive the training loss $\mathcal{L}_{R}$ from Equations \eqref{eq: ELBO_VAE} and \eqref{eq: DKL analytic} as
\begin{align*}
    \mathcal{L}_{R}(\alpha,\theta,\phi) = &\sum_{i=1}^N \Bigg[\bigg((\alpha_\phi(r^{(i)}) - \alpha)\psi(\alpha) - \log\frac{\Gamma(\alpha_\phi(r^{(i)}))}{\Gamma(\alpha)} +\alpha\log\beta_\phi(r^{(i)}) + \alpha_\phi(r^{(i)})\frac{1-\beta_\phi(r^{(i)})}{\beta_\phi(r^{(i)})}\bigg)\\
    &+\frac{1}{L}\sum_{l=1}^L \log f_{\mathbf{\Gamma}}\left(r^{(i)}\ ;\ \alpha_\theta(z_{rad}^{(i,l)}),\beta_\theta(z_{rad}^{(i,l)})\right) \Bigg],
\end{align*}
Similarly, the training loss $\mathcal{L}_{{\bf \Theta}\mid R}$ writes as
\begin{align*}
    \mathcal{L}_{{\bf \Theta}\mid R}(\nu, \omega) = &\sum_{i=1}^N \Bigg[ \frac{1}{2}\sum_{j=1}^n\Big(1 +2\log[\sigma_\omega({\bf \Theta}^{(i)},r^{(i)})]_j - [\mu_\omega({\bf \Theta}^{(i)},r^{(i)})]_j^2 - [\sigma_\omega({\bf \Theta}^{(i)},r^{(i)})]_j^2\Big)\\
    &+\frac{1}{L}\sum_{l=1}^L \log \mathcal{N}\left({\bf \Theta}^{(i)}\ ; \ {\bf \mu}_{\nu}({\bf z}_{ang}^{(i,l)},r^{(i)}),\; \operatorname{diag}({\bf \sigma}_\nu({\bf z}_{ang}^{(i,l)},r^{(i)})^2\right)\Bigg].
\end{align*}
In the implementation of these training losses, we sample each $z_{rad}^{(i,l)}$ from the pdf $q_\phi(z_{rad}\mid r^{(i)})$, and each ${\bf z}_{ang}^{(i,l)}$ from the pdf $q_\omega({\bf z}_{ang}\mid{\bf \Theta}^{(i)},r^{(i)})$. Overall, our training loss $\mathcal{L}_{ExtVAE}$ is the sum
\begin{equation}\label{eq: training loss}
    \mathcal{L}_{ExtVAE} (\alpha,\theta,\phi,\nu, \omega)= \mathcal{L}_R (\alpha,\theta,\phi) + \mathcal{L}_{{\bf \Theta}\mid R}(\nu, \omega). 
\end{equation}
In practice, we first train the radius VAE, {\em i.e.} parameters $(\alpha,\theta,\phi)$, and second the angular VAE, {\em i.e.} parameters $(\nu, \omega)$.
Depending on the experiments, the parameter $\alpha$ of the radius prior can either be known or unknown. When known, it suffices to set $\alpha$ equal to the desired value in Equation \eqref{eq: training loss}. When unknown, $\alpha$ can be directly optimized by gradient descent.\\ 
For estimating $(\alpha,\theta,\phi)$, the training is limited to 5000 epochs, and the learning rate set to $10^{-4}$.
The same maximum number of epochs is used to estimate $(\nu, \omega)$ but the learning rate is fixed to $10^{-5}$.\\
In both cases, we used Adam optimizer \citep{kingma2014adam} and a batch size of $32$. From a code perspective, we made extensive use of the Tensorflow and Tensorflow-Probability libraries. The whole code is freely available.\footnote{The implementation is available at \url{https://github.com/Nicolasecl16/ExtVAE}.}

\subsection{Performance Assessment}\label{perf assess}
We present the various criteria used to evaluate the different approaches tested in our numerical experiments. These criteria can be grouped into three categories, depending on whether they relate to radius distributions, output distributions or angular measures.\\
For the radius distribution, log-quantile-quantile plots \citep[for detailed examples, see][Chapter 4]{resnick2007heavy}, abbreviated as log-QQ plots, are graphical methods we use to informally assess the goodness-of-fit of our model to data. This method consists in plotting the log of the empirical quantiles of a generated sample against those of the experimental data. If the fit is good, the plot should be roughly linear. We use the ELBO cost (Equation \ref{eq: ELBO_VAE}) on a given data set as a numerical indicator to compare the radius distribution obtained with our VAE approach to a vanilla VAE not tailored for extremes. Another criterion considered is an estimator of the KL divergence between the data distribution and that generated by the tested approaches, together with a variant proposed by \cite{naveau2014non}. This variant estimates the KL divergence above a specified threshold (see Appendix~\ref{app: KLu}).\\ 
For the whole set of generated samples, we investigate several other criteria. We compute the Wasserstein distances between extreme samples generated by the tested approaches and extreme true data samples.
To do so, we select a large threshold $u$ and compute the Wasserstein distance above this threshold by restricting the samples to points with a radius greater than $u$. Finally, we rescale this distance by dividing it by $u^2$ (see Appendix~\ref{app: WD}). To compute the Wasserstein distances, we use pre-implemented functions from the Python Optimal Transport package \citep[see][]{flamary2021pot}.\footnote{The documentation is available at \url{ https://pythonot.github.io/quickstart.html}}\\
We have seen that for a multivariate regularly varying random vector, the radius and the angle becomes independent when the radius tends to infinity (see Equation \ref{eq: wan MRV}). In practice, one can consider the radius and the angle independent by choosing a sufficiently large radius threshold. \cite{wan2019threshold} have established a criterion to detect whether the respective distributions of the radius and the angle can be considered as independent, and thus to choose the corresponding threshold radius. This allows us to compare the threshold radii between the true data and the generated data. We rely on the testing framework introduced in \cite{wan2019threshold} to calculate a $p$-value that follows a uniform distribution if the distributions of the radius and the angle are independent, and that is close to 0 otherwise (see Appendix \ref{app: Threshold selection}).

\subsection{Notations and Benchmarked Approaches}\label{notations}
We refer to our generative approach as ExtVAE if we assume that the tail index $\alpha$ is known, and as UExtVAE if the tail index is learned from data. If we restrict ourselves to the radii generated by ExtVAE and UExtVAE via the procedure described in Section \ref{radius}, we denote respectively ExtVAE$_r$ and UExtVAE$_r$. We compare our approach with the Standard VAE of Example \ref{ex: standard VAE}, i.e. with normal distribution for prior, target and likelihood, indicated by the acronym StdVAE. We also compare our approach with ParetoGAN which is the GAN scheme for generating extremes proposed by \cite{huster2021pareto}. The ParetoGAN is a Wasserstein GAN \citep[see][]{arjovsky2017wasserstein} with Pareto prior. Given the difficulty of training a GAN, as well as the number of factors that can influence the results it produces, we empirically tuned the ParetoGAN architecture to provide a sensible GAN baseline in our experiments. Although this parameterization may not be optimal, our interest goes beyond a simple quantitative intercomparison 
in exploring and understanding the differences between the proposed VAE approach and GANs in their ability to represent and sample extremes.

\section{Experiments}\label{experiments}
We conduct experiments on synthetic and real multivariate data sets. The synthetic data set involves a heavy-tailed radius distribution and the angular distribution on the multivariate simplex is a Dirichlet distribution with radius-dependent parameters. The real data set corresponds to a monitoring of Danube river network discharges. 

\subsection{Synthetic Data Set}

We first consider a synthetic data set of samples from a 5-dimensional heavy-tailed random vector with a tail index $\alpha=1.5$. We detail the simulation setting in Appendix \ref{app: synthesized dataset}. 
The training data set consists of 250 samples, compared to 750 for the validation data set and 10000 for the test data set.\\
\begin{table}
\caption{\label{tab: R1 loss} Mean ELBO cost (see Equation \ref{eq: ELBO_VAE}) computed on the training (Train), validation (Val) and test (Test) data set for the radius variable $R_1$.}
\begin{center}
\begin{tabular}{|c|c|c|c|}
  \hline
  Approach & Train loss & Val loss  & Test loss  \\
  \hline
  StdVAE & $1.21$ & $4.81$ &  $+\infty$\\
  \hline
  ExtVAE$_r$ & $0.88$ & $1.10$  & $1.12$\\
  \hline
  UExtVAE$_r$ & $0.95$ & $1.12$  & $1.15$\\
  \hline
\end{tabular}
\end{center}
\end{table}
\begin{figure}
\centering
    \includegraphics[width=0.6\columnwidth]{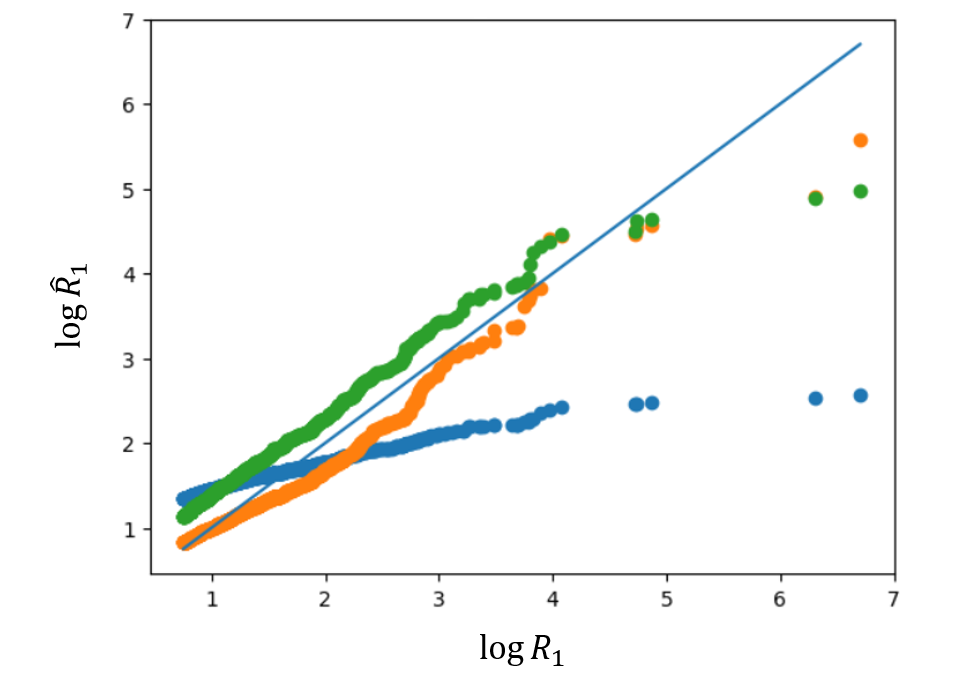}
    \label{fig: log QQplot UExtVAE}
\caption{Log-QQ plot between the upper decile of 10000 radii samples from StdVAE (blue dots), ExtVAE$_r$ (orange dots), UExtVAE$_r$ (green dots) and the upper decile of the test data set of $R_1$. The log values of the true radius, denoted $\log R_1$ is on the x-axis, the log of the estimated radius, denoted $\log \hat{R}_1$, is on the y-axis. The dots should lie close to the blue line}
\label{fig: QQ plots}
\end{figure}
In Table \ref{tab: R1 loss} and Figure \ref{fig: QQ plots}, we study the ability of the benchmarked VAE schemes to sample heavy-tailed radius distribution. 
The results in Table~\ref{tab: R1 loss} indicate that the mean ELBO cost remains approximately constant for our approaches across the training, validation, and test data sets, whereas it diverges for StdVAE. This indicates that our approaches, unlike StdVAE, successfully extrapolate the tail of the radius distribution. The log-QQ plots given in Figure \ref{fig: QQ plots} illustrate further that ExtVAE$_r$ and UExtVAE$_r$ schemes relevantly reproduce the tail pattern of the radius distribution $R_1$ while this is not the case for StdVAE. 
Figure \ref{fig: KL upon u} represents, for the compared methods, the evolution of the KL divergence between the true distribution and the simulated ones above a varying quantile $u$ (Equation \ref{eq: KL div threshold}). Again, the StdVAE poorly matches the target distribution with a clear increasing trend for quantiles $u$ such that $P(R_1>u)\geq0.3$. 
Conversely, the KL divergence is much smaller and much more stable for ExtVAE$_r$ and UExtVAE$_r$ schemes, especially for large quantile values.
\begin{figure}
\centering
    \includegraphics[width=0.6\columnwidth]{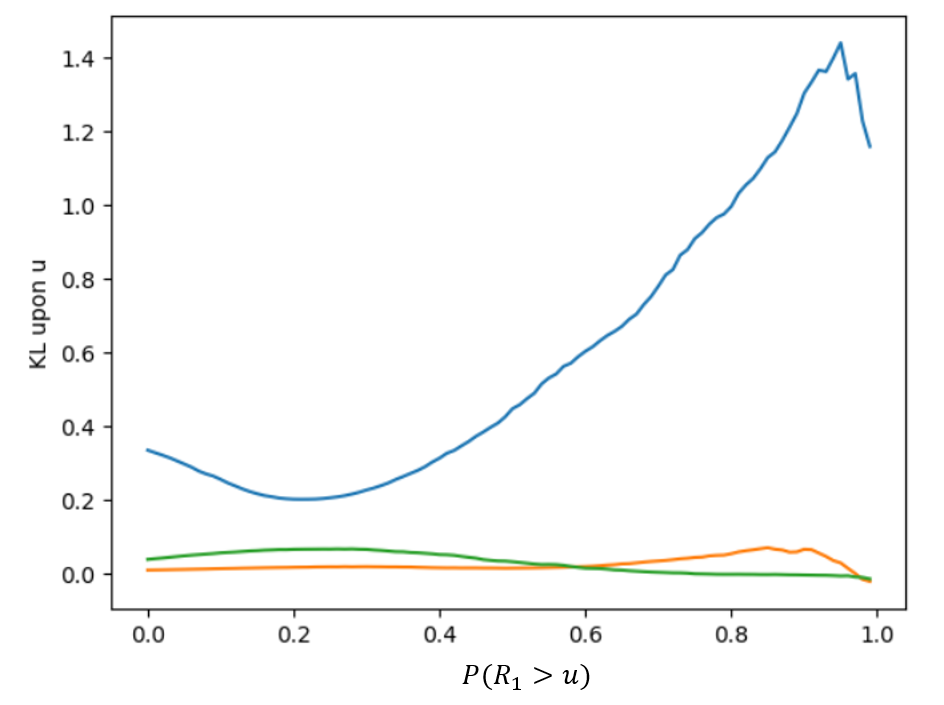}
\caption{KL divergence between the radius distribution of the benchmarked VAE models and the target heavy-tailed distribution: we display the KL divergence (see Equation \ref{eq: KL div threshold}) above quantile $u$ for $P(R_1>u)$ varying from $0$ to $1$. The compared VAEs are the StdVAE (blue curve), the ExtVAE$_r$ (orange curve) and the UExtVAE$_r$ (green curve). To estimate the KL divergences, 10,000 samples from each distribution are drawn, and $u$ is set to the quantile computed from the samples of $R_1$.}
\label{fig: KL upon u}
\end{figure}
Interestingly, for the different criteria, the results obtained with UExtVAE$_r$ are very close or even indistinguishable from those obtained with ExtVAE$_r$. This suggests that the estimation of the tail index is accurate.
In order to better assess the robustness of this estimation, we report in Figure \ref{fig: Learned tail index} the evolution of the tail index of UExtVAE$_r$ as a function of the training epochs for randomly chosen initial values.
\begin{figure}[ht]
    \centering
    \includegraphics[width=0.6\columnwidth]{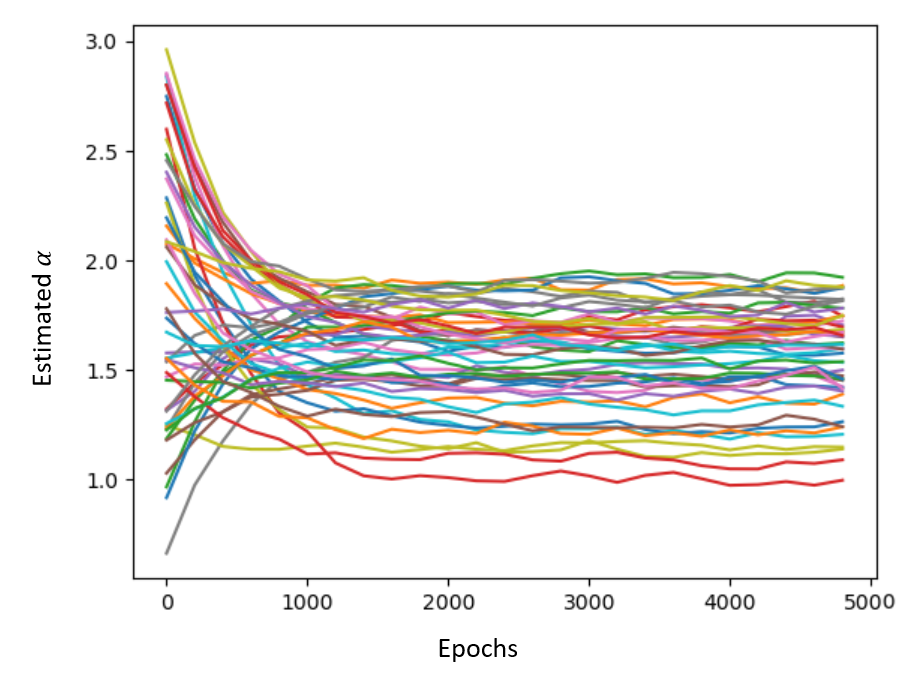}
    \caption{Evolution of the tail index $\alpha$ of UExtVAE$_r$ during the training procedure: we report the value of the tail index as a function of the training epochs for training runs from different initial values. The initial values of $\alpha$ are sampled uniformly between $0.5$ to $3$. The true value of $\alpha$ is $1.5$.}
    \label{fig: Learned tail index}
\end{figure}
Given the expected uncertainty in estimating the tail index (see Appendix \ref{app: tail index}), UExtVAE$_r$ estimates are globally consistent. We report meaningful estimation patterns since the reported curves tend to get closer to the true value as the number of epochs increase, although it might show some bias when initial value is far from the true tail index value. The mean value of the estimated tail index is $1.56$ with a standard deviation of $0.2$.\\
We now focus on the angle generation. The best parameterization for the likelihood of the conditional VAE is a Dirichlet parameterization (see Appendix \ref{app: Dirichlet}). An important advantage of our approach is the ability to generate samples on the simplex for a given radius as detailed in Section \ref{sphere}, and even to sample the angular measure. Figure \ref{fig: projected density sphere vae} displays the angular measure projected onto the last two components of the simplex for
the true angular measure, our ExtVAE approach and the ParetoGAN. For the latter, we approximate the angular measure by the empirical measure above a very high threshold. The ExtVAE shows a good agreement with the true distribution, though not as sharp. 
By contrast, the distribution sampled by the ParetoGAN tends to reduce to a single mode. 
This confirms the result of Proposition \ref{thm: GAN and limit distribution}.\\
\begin{figure}
    \centering
    \includegraphics[width=\columnwidth]{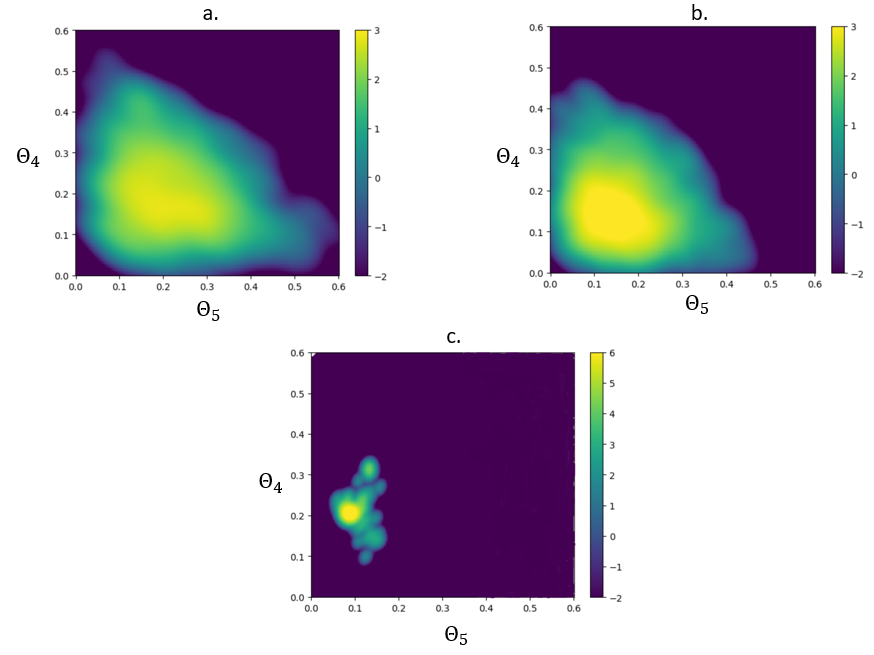}
    \caption{Log probability of the angular measure obtained with a. ExtVAE, b. true distribution, c. ParetoGAN, projected on axes 4 and 5 (named $\theta_4$ and $\theta_5$). For ParetoGAN, the estimation is based on 10000 samples at a high value of radius, typically above 10, which corresponds to the upper percentile of $R_1$ distribution.}
    \label{fig: projected density sphere vae}
\end{figure}
\noindent
To compare the overall distributions sampled by the the benchmarked schemes to the true data distribution, we estimate the Wasserstein distance (Equation \ref{eq: Wasserstein}) between 10000 generated items and the test set. The ExtVAE performs slightly better than the ParetoGAN (5.37 vs. 6.80). To compare the tails, Figure~\ref{fig: comp rescaled OT} shows the rescaled Wasserstein distance above a radius threshold (see Equation~\ref{eq: rescaled Wasserstein}), computed between samples generated by the two approaches and the true data. We focus on radius thresholds above 2, which corresponds to the highest decile of the true data. Again, the ExtVAE performs better than the ParetoGAN, especially for radius values between 2 and 4, corresponding roughly to quantiles between 0.90 and 0.95. We recall that 
the ParetoGAN relies on the minimization of a Wasserstein metric, whereas the 
ExtVAE relies on a likelihood criterion. Therefore, we regard these results as an illustration of the better generalization performance of the ExtVAE, especially for the extremes.\\
\begin{figure}
    \centering
    \includegraphics[width=0.6\columnwidth]{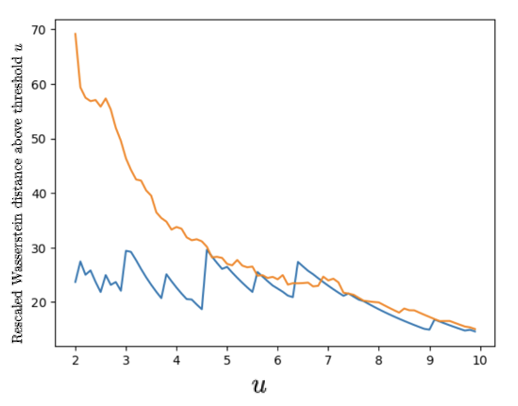}
    \caption{Wasserstein distance upon radius threshold $u$, divided by the square of $u$, calculated between 10000 samples drawn from generative models and test set. The ParetoGAN is shown in orange, and our model in blue. The considered thresholds are above $2$, which is roughly the upper decile of the radius distribution.}
    \label{fig: comp rescaled OT}
\end{figure}
\noindent
At last, we estimate the threshold at which the radius and angle distributions become independent according to the criterion proposed by \cite{wan2019threshold}. Although, by construction, there is no radius value from which there is a true independence, the estimator gives a radius above which one can approximately consider that the limit measure is reached.
We compare in Figure \ref{fig: pvalues} the $p$-values for assessing independence between the radius distribution and the angle distribution (see Appendix \ref{app: Threshold selection}). The $p$-values are represented as a function of the chosen threshold for each of the three considered data sets:  the test data set, the data set sampled from the ExtVAE and the data set sampled from the ParetoGAN. 
The ExtVAE slightly underestimates the radius threshold compared to the true data ($1.3$ vs. $1.6$), while the ParetoGAN leads to a large overestimatation ($2.7$ vs. $1.6$). 
This illustrates further that the ExtVAE better captures the statistical features of high quantiles than ParetoGAN does. This improvement may be attributed to the polar decomposition used in the ExtVAE, which enables a more accurate modelling of the joint asymptotic behaviour of the radius and the angle.
\begin{figure}
    \centering
    \includegraphics[width=0.6\columnwidth]{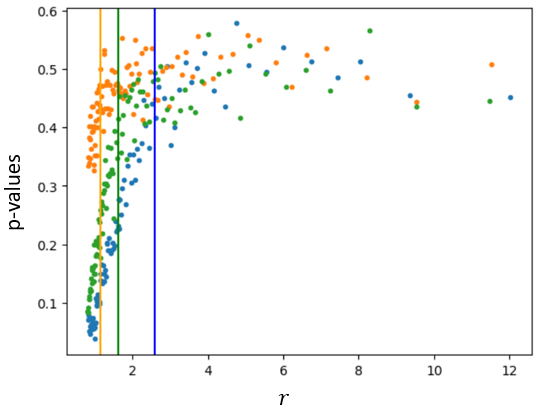}
    \caption{$p$-values for assessing independence between radius and angle distribution at different radius thresholds, computed according to Appendix \ref{app: Threshold selection} for the test data set (green points), 10000 samples of the ExtVAE (orange points), and 10000 samples of the ParetoGAN (blue points). The vertical bars correspond to the threshold below which the $p$-values are less than $0.45$. Above this threshold, the radius and the angle can be roughly considered as independent. We refer to \cite{wan2019threshold} for further details.}
    \label{fig: pvalues}
\end{figure}

\subsection{Danube River Discharge Case-study}
Our second experiment addresses a real heavy-tailed multivariate data set.
We consider the daily time series of river flow measurements over 50 years at five stations (among thirty-one) of the Danube river network (see Appendix \ref{app: Danube dataset} for further details). River flow data are known to often exhibit heavy-tailed behavior \citep[see][]{katz2002statistics}. In reference to the numbering of the stations (see Figure \ref{fig: Danube}), we note the random variables associated with the considered stations $X_{23},\ X_{24},\ X_{25},\ X_{26},\ X_{27}$. From the 50-year time series of daily measurements, we take a measurement every 25 days in the considered stations to form the training set. The remaining set serves as the test set. There are 730 daily measurements in the training set, and 17486 daily measurements in the test set. We have deliberately chosen a training set size that is significantly smaller than the test set size. This keeps the problem within the scope of tail extrapolation while retaining sufficient test data to reliably assess the accuracy of our tail distribution estimate. \\
We focus on the question raised in introduction (see Figure \ref{fig: pb extreme}): can we extrapolate and generate consistent samples in extreme areas not observed during the training phase? We focus on extreme areas of the form $\bigcap_{i=23}^{27} X_i >u_i$ with $u_i$ large predefined thresholds. This corresponds to flows exceeding predefined thresholds at several stations. Namely we define
\begin{equation}\label{eq: conjugate_extremes}
    A_j^{(p)}=\bigcap_{i=23}^{j} X_i >u_i^{(p)}, 
\end{equation}
with $p$ a given probability level and $u_i^{(p)}$ the corresponding quantile of the flow $i$ in test set. The estimation of the probabilities of occurrence of such events is key to the assessment of major flooding risks along the river. \\
Our experiments proceed as follows. We train the StdVAE, the UExtVAE and the ParetoGAN using the training set. For this case-study, the best parameterization for the likelihood of the angular part of the UExtVAE is a projection of a multivariate normal distribution (see Equation \ref{eq: ang param}). As evaluation framework, we generate for each trained model a number of samples of the size of the test data set, and we compare the proportion of samples that satisfy a given extreme event $A_j^{(p)}$ to that in the test data set.
We consider extreme events $A_j^{(p)}$ with $p=0.9$ and $p=0.99$.
Table \ref{tab: prop Danube} synthesizes this analysis for the StdVAE, UExtVAE and ParetoGAN.
Note that the training data set does not include the considered extreme events when  $p=0.99$. 
Interestingly, the UExtVAE samples such extreme events with the same order of magnitude of occurrence as in the test data set. For instance, the proportion of samples that satisfy $A_{26} ^{(0.99)}$ and $A_{27} ^{(0.99)}$ is consistent with that observed in the test data set, respectively 0.2\% and 0.18\% 
against $0.4$\% and $0.25$\%. 
In contrast, the StdVAE does not generalize beyond the training data set. While it generates extreme events corresponding to \(p = 0.9\), as these are present in the training data, it fails to produce any samples in \(A_{26}^{(0.99)}\) and \(A_{27}^{(0.99)}\).
ParetoGAN generates samples that satisfy $A_{25} ^{(0.99)}$, $A_{26} ^{(0.99)}$ and $A_{27} ^{(0.99)}$. Although satisfactory, the sampled proportions are further from true proportions than for our approach. Moreover, by repeating the experiment, it appears that $A_{25} ^{(p)}$ is always equal to $A_{26} ^{(p)}$ whenever $p\geq0.9$. The effect is likely due to extremes being generated along a specific direction, as established in Proposition \ref{thm: GAN and limit distribution}.\\
\begin{table}
\caption{\label{tab: prop Danube} 
Proportion (in \%) of extreme events $A_j^{(p)}$ (Equation \ref{eq: conjugate_extremes}) in the training and test data sets as well as in the data sets sampled from the trained StdVAE, UExtVAE and ParetoGAN with the same size as the test data set. The considered values of $j$ are $25$, $26$ and $27$, and those of $p$ are $0.9$ and $0.99$.} 
\begin{center}
\begin{tabular}{|c||c|c|c|c|c|}
    \hline
     &\multicolumn{5}{|c|}{$p = 0.9$} \\ 
    \hline
     & Train & Test & UExtVAE & StdVAE & ParetoGAN \\
    \hline
    \hline
   $A_{25} ^{(p)}$ & 5.9 & 6.6 & 5.0 & 3.8 & 5.5\\
   $A_{26} ^{(p)}$ & 4.9 & 6.0 & 4.6 & 3.3 & 5.5\\
   $A_{27} ^{(p)}$ & 3.8 & 5.1 & 4.1 & 2.5 & 4.4\\
    \hline
\end{tabular}

\bigskip

\begin{tabular}{|c||c|c|c|c|c|}
    \hline
     & \multicolumn{5}{|c|}{p = 0.99}\\ 
    \hline
     & Train & Test & UExtVAE & StdVAE & ParetoGAN \\
    \hline
    \hline
   $A_{25} ^{(p)}$ & 0.0 & 0.48 &  0.22 & 0.01 & 0.13\\
   $A_{26} ^{(p)}$ & 0.0 & 0.4 &  0.2 & 0.0 & 0.13\\
   $A_{27} ^{(p)}$ & 0.0 & 0.25 &  0.18 & 0.0 & 0.09\\
    \hline
\end{tabular}
\end{center}
\end{table}
Because we study a true data set, the tail index of the radius of the discharge data set is not known a priori. \cite{ref32} reports an estimate of $3.5 \pm 0.5$ considering only the summer months. In our case, the tail index of the trained UExtVAE is of $4.5$. It is slightly higher than the value found by \cite{ref32}, which means a less heavy-tailed distribution. Indeed, half of the annual maxima occurs in June, July or August, typically due to heavy summer rain events. Thus, summer months are expected to depict heavier tails than the all-season data set, which is consistent with our findings.

\section{Conclusion}\label{ccl}
This study bridges VAE and EVT to address the generative modeling of multivariate extremes. 
Following the concept of multivariate regular variation, we leverage a polar decomposition to combine a VAE for heavy-tailed radius sampling with a conditional VAE to sample from the angular distribution given the radius. 
Doing so, we explicitly address the dependence between the variables at each radius, and in particular the angular measure. 
Experiments performed on synthetic and real data support the relevance of our approach compared with vanilla VAE schemes and GANs tailored for extremes. In particular, we illustrate the ability to consistently sample extreme regions that have never been observed during the training stage.\\
Our contribution naturally advocates for extensions to multivariate extremes in time and space-time processes \citep{basrak2009regularly,liu2012sparse} as well as to VAE for conditional generation problems \citep{zheng2019pluralistic,grooms2021analog}.


\acks{The authors acknowledge the support of the French Agence Nationale de la Recherche (ANR) under reference ANR-Melody (ANR-19-CE46-0011). 
Part of this work was supported by 80 PRIME CNRS-INSU, ANR-20-CE40-0025-01 (T-REX project), and the European H2020 XAIDA (Grant agreement ID: 101003469). \\}


\newpage

\appendix
\section{Data Sets}\label{app: data set}
This appendix provides details on the two data sets used in the experiments. One data set is synthetic (\ref{app: synthesized dataset}) and the other is a true data set compiling flow measurements (\ref{app: Danube dataset}).
\subsection{Synthesized Data Sets}\label{app: synthesized dataset}
We sample in a space of dimension $5$. The radius distribution $R_1$ is given by
\begin{align*}
    R_1 = \ &2{\bf U} \times  {\bf Inv\Gamma}(\alpha=1.5\ ; \ \beta = 0.6), \\
\end{align*}
with ${\bf U}$ uniformly distributed on $[0,1]$. From Breiman's Lemma, the radius distribution is heavy-tailed with tail index $\alpha=1.5$.\\
The detailed expression of the conditional angular distribution ${\bf \Theta_1}\mid R_1 = r$ 
 is given by
\begin{equation}\label{eq: def R1 theta1}
    \big[{\bf \Theta_1}\mid R_1 = r\big] =  \mathbf{Diri}\left(a_1(r),a_1(r),a_2(r),a_2(r),a_2(r)\right),
\end{equation}
where $a_1(r) = 3\left(2 - \mathrm{min}(1,1/2r)\right)$, $a_2(r)=3\left(1 + \mathrm{min}(1,1/2r)\right)$, and $\mathbf{Diri}$ stands for Dirichlet distribution (see Appendix \ref{app: Dirichlet}).\\
Figure \ref{fig: Radius pdf} gives the empirical pdf of $R_1$ based on 1000 samples. 

\begin{figure}
\centering
\includegraphics[width=\columnwidth]{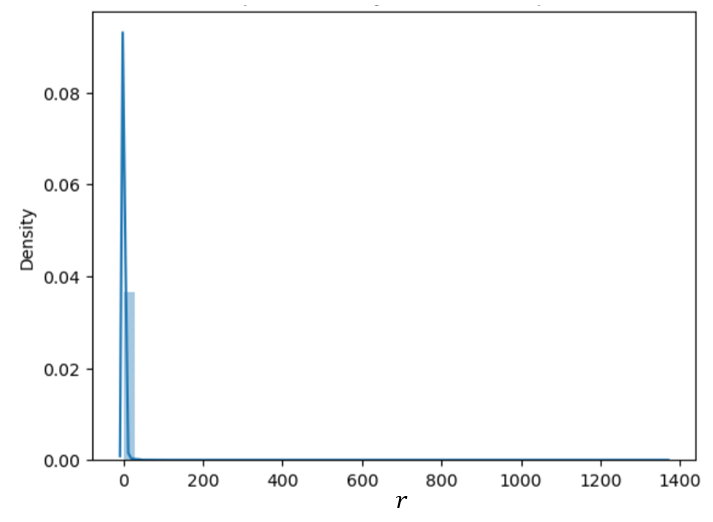}
\caption{Empirical density of $R_1$ based on 1000 samples.}
\label{fig: Radius pdf}
\end{figure}

\subsection{Danube River Network Discharge Measurements}\label{app: Danube dataset}
The Danube upper basin is a European river network which drainage basin covers a large part of Austria, Switzerland and southern Germany. Figure \ref{fig: Danube} shows the topography of the Danube basin as well as the locations of the 31 stations at which daily measurements of river discharge are available for a 50-year time period. Danube river network data set is available from the Bavarian Environmental Agency at \url{http://www.gkd.bayern.de}. As river discharges usually exhibit heavy-tailed distribution, this data set have been extensively studied in the community of multivariate extremes \citep[see, e.g.][]{ref32,ref31}. We consider measurements from a subset of five stations, numbered 23 to 27 (red triangles in Figure \ref{fig: Danube}), from which we aim to generate new realistic observations.
\begin{figure}
    \centering
    \includegraphics[width=\columnwidth]{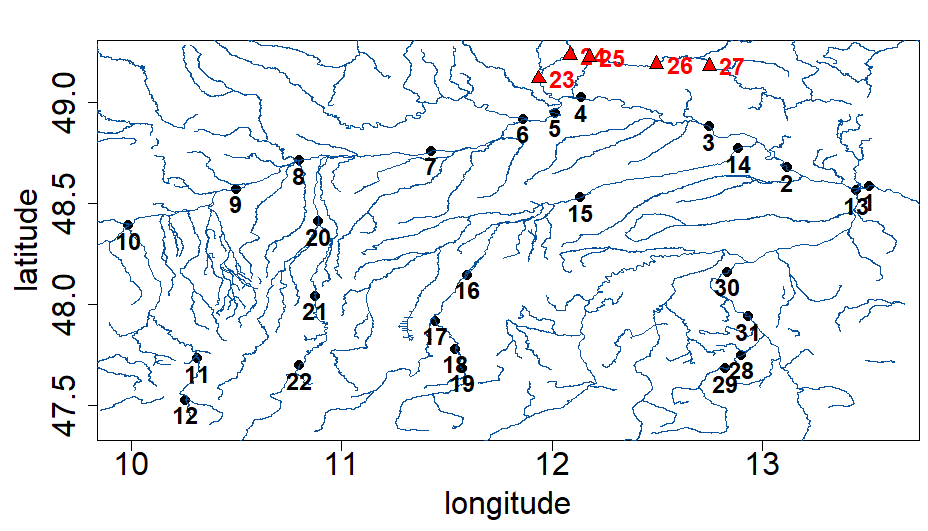}
    \caption{Topographic map of the upper Danube basin with 31 available gauging stations. In our experiments, we focus on the 5 stations indicated by the red triangles. A data set of 50 years of daily measurements is considered (from 1960 to 2010).}
    \label{fig: Danube}
\end{figure}
\section{Additional Notions}
This appendix provides further explanations of notions that are used either in the main text or in the proofs (Appendix \ref{app: proofs}).
\subsection{Lipschitz Continuity}\label{app: lip_continuity}
\begin{definition}\label{def: Lipschitz}
    Let $\left(\mathbb{E}, d_\mathbb{E}\right)$ and $\left(\mathbb{F}, d_\mathbb{F}\right)$ be two metric spaces with $d_\mathbb{E}$ and $d_\mathbb{F}$ the respective metrics on sets $\mathbb{E}$ and $\mathbb{F}$. A function $f: \mathbb{E} \rightarrow \mathbb{F}$ is called Lipschitz continuous if there exists a real constant $k \geqslant 0$ such that, for all $x_1$ and $x_2$ in $\mathbb{E}$, 
    \begin{equation}
        d_\mathbb{F}\left(f\left(x_1\right), f\left(x_2\right)\right) \leqslant k d_\mathbb{E}\left(x_1, x_2\right).
    \end{equation}
\end{definition}
\begin{remark}\label{rk: lip_fun} 
    If $\mathbb{E}$ and $\mathbb{F}$ are normed vector spaces with respective norm $\|.\|_{\mathbb{E}}$ and $\|.\|_{\mathbb{F}}$, then $f$ Lipschitz continuous implies that there exists $k\geq 0$ such that
    $$
    d_\mathbb{F}\left(f(x), f\left({\bf 0}_{\mathbb{E}}\right)\right) \leqslant k d_\mathbb{E}\left(x, {\bf 0}_{\mathbb{E}} \right).
    $$
    Consequently, $\|f(x)\|_{\mathbb{F}}\leqslant k \|x\|_{\mathbb{E}} + \|f({\bf 0}_{\mathbb{E}})\|_{\mathbb{F}}$. 
\end{remark}
\subsection{Convergence of Measures}\label{app: weak_conv}
\begin{definition}
    Let $\mathbb{E}$ be a metric space and $\left(\mu_n\right)_{n \in \mathbb{N}}$ be a sequence of measures, then $\mu_n$ converges weakly to a measure $\mu$ as $n\to \infty$, if, for any $f:\mathbb{E}\longrightarrow\mathbb{R}$ continuous real-valued bounded function,
    $$
    \lim_{n\to\infty}\int_{\mathbb{E}}fd\mu_n = \int_{\mathbb{E}}fd\mu.
    $$
    If this asymptotic result holds when $f$ is continuous with compact support, we say that $\mu$ converges vaguely to $\mu_n$, denoted $\overset{v}{\longrightarrow}$. The vague convergence is equivalent to the convergence on all (relatively compact) continuity sets.
\end{definition}
\subsection{Additional definition of multivariate regular variation}
The following definition of multivariate regularly varying vector extends Definition \ref{def: MRV}. We refer to \cite{resnick2007heavy} Chapter 6 for a detailed review.
\begin{definition}\label{def: equivalent MRV}
    A random vector $\bf X$ has multivariate regular variation if there exists a function $b\to \infty$ and a Radon measure $\mu_{\bf X}$ called the limit measure such that 
 \begin{equation}\label{eq: limit measure}
        \lim_{t \to \infty}t\mathbb{P}\left(\frac{\mathbf{X}}{b(t)}\in \bullet\right) \overset{v}{\longrightarrow} \mathbf{\mu_{X}}(\bullet).
\end{equation}
\end{definition}
Note that this definition is slightly more general than Definition \ref{def: equivalent MRV}, as ${\bf X}$ does not necessarily have positive components. 
\begin{remark}\label{rk: MRV}
If the random vector $\bf X$ has positive components, the angular measure $\bf S_X$ defined in Equation \eqref{eq: wan MRV} is directly related to the limit measure. Indeed, for any measurable subset $\mathbb{A}$ of the simplex, and any measurable subset $\mathbb{I}$ of $(0,\infty)$, it holds that 
$$
\mathbf{\mu_{X}}\circ T^{-1}(\mathbb{I},\mathbb{A}) = \nu_\alpha(\mathbb{I})\times \bf S_X(\mathbb{A}),
$$
where $T$ is the polar transform define for any vector $\bf x$ by $T({\bf x}) = \left(\|{\bf x}\|\ , \  \frac{{\bf x}}{\|{\bf x}\|} \right)$, and $\nu_\alpha$ a measure on  $(0,\infty)$ such that $\nu_\alpha( [t,\infty])= ct^{-\alpha}$, with $c$ and $\alpha$ strictly positive constants.\\
Thus, the angular measure is the limit measure projected onto the simplex.    
\end{remark}

\section{Dirichlet Parameterization of the Likelihood}\label{app: Dirichlet}
A Dirichlet distribution with $m\geq2$ strictly positive parameters $(a_i)_{i=1}^m$ is supported by the $(m-1)$-dimensional simplex. Its pdf  is defined by
    \begin{eqnarray*}
    f_{\bf Diri}({\bf x}\ ; \ (a_i)_{i=1}^m) = \frac{1}{B\left((a_i)_{i=1}^m\right)}\prod_{i=1}^m x_i^{a_i-1},\\
    \text{with } {\bf x}\in\mathbb{R}^m \text{ s.t } \sum_{i=1}^m x_i =1,
    \end{eqnarray*}
    where $B$ is the multivariate beta distribution.\\
To use a Dirichlet parameterization of the likelihood, Equation \eqref{eq: ang param} becomes
$$
p_\nu({\bf s}\mid {\bf z}_{ang},r)\sim f_{\bf Diri}({\bf s}\ ; \ a_{\nu}({\bf z}_{ang},r)),
$$
where $a_\nu$ takes values in $\left(\mathbb{R}^+\right)^m$.\\
Using this parameterization, Condition \ref{cond: indep ang rad} must be modified to preserve the asymptotic independence between the radius and angle distributions. It then takes the following form.
\begin{condition}
     $a_{\nu}$ is such that there exists a function $a_{\infty}$ which verifies for each ${\bf z}_{ang}$
    \begin{equation*}
        \lim_{r \to +\infty}a_{\nu}({\bf z}_{ang},r) = a_{\infty}(\mathbf{z}_{ang}).
    \end{equation*}
\end{condition}
Again, we set $a_{\nu}({\bf z}_{ang},r) =  h_{\nu}({\bf z}_{ang},\frac{1}{1+r})$, with $h_{\nu}$ Lipschitz continuous and $a_{\infty}(\mathbf{z}_{ang}) = h_\nu({\bf z}_{ang},0)$. Similar to Remark \ref{rk: param_inf}, it is still easy to sample from the angular measure.
\section{Tail Index Estimation}\label{app: tail index}
Estimating the tail index of a univariate distribution from samples is not an easy task. 
As an illustration, we drew the Hill plot (see e.g., \cite{resnick2007heavy}, Section~4.4; \cite{xie2017analysis}, Section~2.2) for $R_1$ in Figure~\ref{fig: Hill plot}. 
The Hill plot is based on the Hill estimator, which computes an estimate of the tail index from the $k$ largest order statistics. 
Plotting this estimator as a function of $k$ provides a graphical tool commonly used in the extreme-value community to assess the tail index. 
If the curve becomes approximately constant beyond a certain order statistic, this plateau value is taken as an estimate of the tail index. 
In our case, however, the Hill plot is difficult to interpret because no clear plateau emerges.

Other methods are also widely used to estimate the tail index, such as maximum likelihood estimation. 
This approach consists in fitting a GP distribution (Equation~\ref{eq: GP}) to the subset of data exceeding a chosen threshold \citep[see][for details]{coles2001introduction}. 
For example, on the training dataset of $R_1$, maximum likelihood estimation yields an estimate of $1.28$ for the tail index when the threshold is the $0.8$-quantile, but this estimate increases to $1.67$ when using the $0.9$-quantile as threshold.

\begin{figure}
    \centering
    \includegraphics[width=0.8\linewidth]{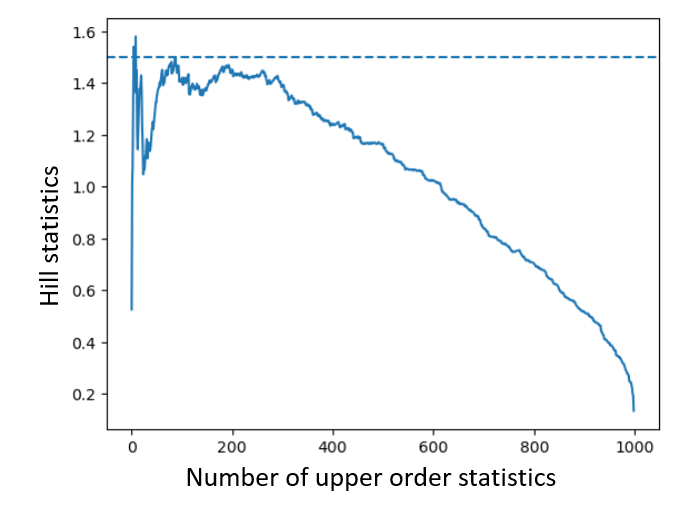}
    \caption{Hill plot for the 1000 $R_1$ samples of train and validation set (blue curve), the dashed line indicates the true value of the tail index, i.e. $1.5$. }
    \label{fig: Hill plot}
\end{figure}

\section{Criteria}\label{app: criteria}
This section details the evaluation criteria used to benchmark the approaches in Section~\ref{experiments}. These criteria assess (i) the radius distributions, with an emphasis on tail behavior (Appendix~\ref{app: KLu}), (ii) the similarity between the full multivariate distributions (Appendix~\ref{app: WD}), and (iii) key statistics of the angular distributions (Appendix~\ref{app: Threshold selection}).
\subsection{KL Divergence upon Threshold}\label{app: KLu}
Let us assume that we have $n$ samples ${\mathcal{R}_{true}} =(r_{true}^1,r_{true}^2, ..., r_{true}^n)$ from the true radius distribution and $m$ samples ${\mathcal{R}_{gen}} = (r_{gen}^1,r_{gen}^2, ..., r_{gen}^m)$ from a generative model. Let $\tilde{\bar{F}}_{true}$ (respectively $\tilde{\bar{F}}_{gen}$) denote the empirical estimator of the survival function of the true distribution (respectively the generated distribution) chosen to be non-zero above the largest observed value. Then the empirical estimate $\mathrm{KL}_u({\mathcal{R}_{true}},{\mathcal{R}_{gen}})$ of the KL divergence beyond a threshold $u$ is given by
\begin{eqnarray}\label{eq: KL div threshold}
    \mathrm{KL}_u({\mathcal{R}_{true}},{\mathcal{R}_{gen}}) = &-1 -\frac{1}{N_n}\sum_{i=1}^m\log\left( \frac{\tilde{\bar{F}}_{gen}(max(r_{gen}^i,u))}{\tilde{\bar{F}}_{gen}(u)}\right) \nonumber\\
    &- 1 - \frac{1}{M_m}\sum_{i=1}^n\log\left( \frac{\tilde{\bar{F}}_{true}(max(r_{true}^i,u))}{\tilde{\bar{F}}_{true}(u)}\right) ,
\end{eqnarray}
where $N_n$ and $M_m$ are the number of samples above threshold $u$ respectively among ${\bf \mathcal{R}_{true}}$ and ${\bf \mathcal{R}_{gen}}$. 

\subsection{Wasserstein Distance}\label{app: WD}
Assume we have a set $\mathcal{X} = (\mathbf{x}_1,\dots,\mathbf{x}_n)$ of $n$ i.i.d.\ samples from a random vector $\mathbf{X}\in\mathbb{R}^d$, and a set $\mathcal{Y} = (\mathbf{y}_1,\dots,\mathbf{y}_m)$ of $m$ i.i.d.\ samples from a random vector $\mathbf{Y}$ of the same dimension. 
We define $W(\mathcal{X},\mathcal{Y})$ as the empirical Wasserstein distance between the empirical measures supported on $\mathcal{X}$ and $\mathcal{Y}$:
\begin{equation}\label{eq: Wasserstein}
W(\mathcal{X},\mathcal{Y}) 
= \left( \min_{\gamma \in \mathbb{R}_+^{n\times m}}
      \sum_{i,j} \gamma_{i,j}\,\|\mathbf{x}_i - \mathbf{y}_j\|_2
   \right)^{1/2},
\end{equation}
subject to
\[
n\gamma \mathbf{1}_m = \mathbf{1}_n,
\qquad
m\gamma^\top \mathbf{1}_n = \mathbf{1}_m,
\]
where $\mathbf{1}_k$ denotes the $k$-dimensional vector of ones and $\|\cdot\|_2$ the Euclidean norm.
The rescaled Wasserstein distance above a threshold $r$ is defined by
\begin{equation}\label{eq: rescaled Wasserstein}
W_r(\mathcal{X},\mathcal{Y}) 
= \frac{W(\mathcal{X}_r,\mathcal{Y}_r)}{r^2},
\end{equation}
where $\mathcal{X}_r=\{\mathbf{x}_i\in\mathcal{X} : \|\mathbf{x}_i\|_2>r\}$ 
and similarly $\mathcal{Y}_r$.

\subsection{Threshold selection}\label{app: Threshold selection}
Let us consider ${\bf X}$ a random vector of $\mathbb{R}^d$ with a polar decomposition $(R,{\bf \Theta})$. Let $(\mathbf{x}_1,\mathbf{x}_2,\cdots,\mathbf{x}_N)$ be a sequence of observed samples of ${\bf X}$ with corresponding polar coordinates $r_i$ and $\theta_i$ for each $\mathbf{x}_i$.
Given a decreasing sequence of candidate thresholds for the radius $r_k^{(ref)}$, the aim is to find the smallest $r_k^{(ref)}$ such that $R$ and ${\bf \Theta}$ are independent given $R>r_k^{(ref)}$.
To assess independence between radius and angle distributions, \cite{wan2019threshold} relies on the following hypothesis testing framework:
\begin{itemize}
    \item $H_0$: $R/r_k^{(ref)}$ and ${\bf \Theta}$ are independent given $R > r_k^{(ref)}$,
    \item $H_1$: $R/r_k^{(ref)}$ and ${\bf \Theta}$ are not independent given $R >r_k^{(ref)}$.
\end{itemize}
Given these hypotheses, the authors propose a procedure for testing $H_0$ against $H_1$, that yields a $p$-value uniformly distributed under $H_0$ and small under $H_1$. Thus, for a given threshold $r_k$, when we average the $p$-values, we should find about $0.5$ if $H_0$ is true and closer to $0$ when $H_1$ is true.\\
To compute the $p$-values, the authors rely on the empirical distance covariance \citep{szekely2007measuring}.
\begin{definition}
The empirical covariance between $N$ observations $\mathcal{X} = \{ {\bf x_i} \}_{i=1}^N$ and $N$ observations $ \mathcal{Y} =  \{ {\bf y_i}\}_{i=1}^N$ of two random vectors $\bf{X}$ and ${\bf Y}$ is given by
\begin{align*}
    T_N(\mathcal{X},\mathcal{Y}) &= \frac{1}{N^2}\sum_{i,j=1}^{N}\|{\bf x_i}-{\bf x_j}\|_2\|{\bf y_i}-{\bf y_j}\|_2 + \frac{1}{N^4}\sum_{i,j,k,l=1}^{N}\|{\bf x_i}-{\bf x_j}\|_2\|{\bf y_k}-{\bf y_l}\|_2\\
    &- \frac{2}{N^3}\sum_{i,j,k=1}^{N}\|{\bf x_i}-{\bf x_j}\|_2\|{\bf y_i}-{\bf y_k}\|_2,
\end{align*}
with $\|.\|_2$ the euclidean distance. Notice that ${\bf X}$ and ${\bf Y}$ have not necessarily equal sizes.
\end{definition}
 For a fixed threshold $r_k^{(ref)}$, the following data sets are considered,
 \begin{align*}
    (\mathcal{R}_{dep} ,{\bf \Theta}_{dep})&=\{(r_i,{\theta}_i) \ \text{ s.t. }r_i>r_k^{(ref)}, \; 1\leq i \leq N\},\\
    \mathcal{R}_{indep}&=\{r_i \ \text{ s.t. }r_i>r_k^{(ref)}, \; 1\leq i \leq N\},\\
    {\bf \Theta}_{indep}&=\{{\theta}_i \ \text{ s.t. }r_i>r_k^{(ref)}, \; 1\leq i \leq N\}.
 \end{align*}
We randomly choose a subsample of $(\mathcal{R}_{dep} ,{\bf \Theta}_{dep})$ of size $N_k$ denoted $(\mathcal{R}_{dep}^{N_k} ,{\bf \Theta}_{dep}^{N_k})$. Then, we compute $T_{N,k} = T_{N_k}(\mathcal{R}_{dep}^{N_k} ,{\bf \Theta}_{dep}^{N_k})$, which is the empirical covariance between the radii and angles within the subsample.\\ 
To compute a $p$-value of $T_{N,k}$ under the assumption that the conditional empirical distribution is a product of the conditional marginals, we take a large number $L$ of subsamples of size $N_k$ of $\mathcal{R}_{indep}$ on the one hand, and of ${\bf \Theta}_{indep}$ on the other hand, respectively denoted $\mathcal{R}_{indep}^{N_k,l}$ and ${\bf \Theta}_{indep}^{N_k,l}$ for $1\leq l\leq L$. The empirical covariances $\{\tilde{T}_{N,k}^{l}\}_{l=1}^L= \{T_{N_k}(\mathcal{R}_{indep}^{N_k,l} ,{\bf \Theta}_{indep}^{N_k,l})\}_{l=1}^L$ between radii and angles can be computed.\\
The $p$-value $pv_k$ of $T_{N,k}$ is the empirical value of $T_{N,k}$ relative the $\{\tilde{T}_{N,k}^l\}_{l=1}^L$, i.e.
$$
pv_k = \frac{1}{L}\sum_{i=1}^L \mathbf{1}_{\mathbb{R}^+}(T_{N,k}-\tilde{T}_{N,k}^{l}),
$$
with $\mathbf{1}_{\mathbb{R}^+}$ the indicator function of the set of positive real numbers.\\
This process is repeated $m$ times, with different subsamples of $(\mathcal{R}_{dep} ,{\bf \Theta}_{dep})$ leading to $m$ estimates of $pv_k$. The considered $p$-value is then the mean of these estimates. If the radius and angular distribution are independent, the $p$-value should be around $0.5$, otherwise it is close to 0.

\section{Implicit Reparameterization}\label{app: implicit}
When optimizing the ELBO cost given in Equation \eqref{eq: ELBO_VAE}, explicit reparameterization (see Equation \ref{eq: reparametrization}) is not feasible for the framework proposed in Section \ref{radius}. Following \cite{figurnov2018implicit}, we therefore rely on an implicit reparameterization. It consists in differentiating the Monte Carlo estimator of $E_{q_\phi(Z\mid r^{(i)})}[{f(Z)}]$ using
\begin{equation*}
    \nabla_\phi E_{q_\phi(Z \mid r^{(i)})}[{f(Z)}] =-E_{q_\phi(Z \mid r^{(i)})}[\nabla_z f(z)\nabla_\phi F_{q_\phi}(z)(\nabla_z F_{q_\phi}(z))^{-1} ],
\end{equation*}
with $F_{q_\phi}$ the cdf of $q_{\phi}$. An implicit reparameterization of Gamma distribution, as well as inverse Gamma and many others, is available within a Tensorflow package named TensorflowProbability.\footnote{Details could be found at \url{https://www.tensorflow.org/probability}}

\section{Proofs}\label{app: proofs}
\subsection{Proof of Proposition \ref{thm: VAE margins}}
In the standard parameterization described in Example \ref{ex: standard VAE}, the prior and likelihood are given by
\begin{align*}
\bf{Z} &\sim \mathcal{N}(\textbf{0},{\bf I}_n),\\
X\mid [\bf{Z}=\bf{z}] &\sim \mathcal{N}\left(\mu_\theta({\bf{z}}),\sigma_\theta({\bf{z}})^2\right).\\
\end{align*}
The aim of the proof is to show that $\lim _{u \rightarrow+\infty} u^a P(X>u)=0$ for all $a>0$, in accordance with Remark \ref{rk: lt_ht}. \\

The survival function of $X$ is given by
\begin{align*}
 P(X>u)&=\int_{\bf z} P(X>u \mid {\bf Z}={\bf z}) p({\bf z}) d{\bf z} \\
& =\int_{\bf z} \left(\int_u^{+\infty} \frac{1}{\sqrt{2 \pi \sigma_\theta({\bf z})^2}} \exp \left(-\frac{\left(x-\mu_\theta({\bf z})\right)^2}{2\sigma_\theta({\bf z})^2}\right) d x\right) p({\bf z}) d\bf{z} \\
& = \int_{\bf z} \frac{1}{2}  \operatorname{erfc}\left(\frac{u-\mu_\theta({\bf z})}{\sqrt{2}\sigma_\theta({\bf z})}\right) p({\bf z})d\bf{z}, \\
&
\end{align*}
where $\operatorname{erfc}$ is the complementary error function defined for $y\in \mathbb{R}$ by $\operatorname{erfc}(y) = 1-\ \frac{2}{\sqrt{\pi}}\int_0^ye^{-t^2}dt.$\\

Letting $\Omega(u) = \{{\bf z} \in \mathbb{R}^n \text{ s.t. } u - \mu_\theta(\bf{z}) >0\}$, the previous expression could also be written
\begin{align*}
 P(X>u)&=\int_{ { \bf z}\in \Omega(u)} \frac{1}{2 } \operatorname{erfc}\left(\frac{u-\mu_\theta({\bf z})}{\sigma_\theta({\bf z})}\right) p({\bf z})d\bf{z}\\ 
 &+ \int_{ { \bf z}\in \overline{\Omega(u)}} \frac{1}{2} \operatorname{erfc}\left(\frac{u-\mu_\theta({\bf z})}{\sigma_\theta({\bf z})}\right) p({\bf z})d\bf{z},\\
& =f_1(u) + f_2(u).
\end{align*}
Notice first that $f_2(u)\leq P({\bf z} \in \overline{\Omega(u)})$. As $\mu_\theta$ is Lipschitz continuous, there exists constants $k>0$ and $b\in \mathbb{R}$ such that $\mu_\theta({\bf z}) \leq k\|{\bf z}\| + b$ (see Remark \ref{rk: lip_fun}).\\
It implies that, for $u>b$,
\begin{align}\label{eq: upbound_f2}
    f_2(u)&\leq P\left(\|{\bf z}\|\geq \frac{u-b}{k}\right) \nonumber\\
    &\leq \frac{1}{2} \operatorname{erfc}\left(\frac{u-b}{\sqrt{2}k}\right) \nonumber \\
    &\leq \exp\left(-\left(\frac{u-b}{\sqrt{2}k}\right)^2\right).
\end{align}
where we have used the inequality \citep{chiani2003new}
\begin{equation}\label{eq: erfc_ineq}
    \operatorname{erfc(y)}\leq e^{-y^2}, \text{ for } y>0.
\end{equation}
From Equation \eqref{eq: upbound_f2}, we obtain $\lim _{u \rightarrow+\infty} u^a f_2(u)=0$.\\

To obtain a similar result for $f_1$, we again use inequality \eqref{eq: erfc_ineq} to obtain

$$
f_1(u) \leqslant \int_{{\bf z} \in \Omega(u)} \exp \left(-\left(\frac{u-\mu_\theta({\bf z})}{\sqrt{2}\sigma_\theta({\bf z})}\right)^2\right) p({\bf z})d{\bf z}.
$$
As $\sigma_\theta$ is Lipschitz continuous, there exist constants $k' >0$ and $b' \in \mathbb{R}$ such that $\sqrt{2}\sigma_\theta({\bf z}) \leq k'\|{\bf z}\| + b'$. Then, we can state that
\begin{equation}\label{eq: lastone}
f_1(u) \leqslant \int_{{\bf z} \in \Omega(u)} \exp \left(-\left(\frac{u-(k\|{\bf z}\| + b)}{k^{\prime} \|{\bf z}\| +b^{\prime}}\right)^2\right) p({\bf z}).
\end{equation}
For any given $a>0$, we define  the real-valued function
$$g_u(t)=u^a \exp \left(-\left(\frac{u-(kt + b)}{k^{\prime}t+b^{\prime}}\right)^2\right).$$ 
 The following holds :
$$
\lim _{u \rightarrow+\infty} g_u(t)=0.
$$
Additionally, $g_u(t)$ is maximized with respect to $u$ when $u=u^*(t)$ with
$$
u^*(t)=\frac{-(kt+b) + \sqrt{(kt+b)^2+4 a\left(k^{\prime}t +b^{\prime}\right)^2}}{2}.
$$
Thus, we obtain
$$
\left|g_u(t) \right|  \leqslant\left(2\sqrt{a}(k^{\prime }t+b^{\prime})\right)^{a},
$$
which leads to 
$$
\left|g_u(\|{\bf z}\|) p({\bf z})\right|  \leqslant \left(2\sqrt{a}(k^{\prime }\|{\bf z}\|+b^{\prime})\right)^{a} \exp{(-\|{\bf z}\|^2/2)}.
$$
Since the quantity on the right-hand is integrable with respect to $\|{\bf z}\|$ and independent of $u$, it follows by the dominated convergence theorem and Equation \eqref{eq: lastone} that $\lim _{u \rightarrow+\infty} u^a f_1(u)=0$.\\

The previous results yield $\lim _{u \rightarrow+\infty}  u^a P(X>u) =\lim _{u \rightarrow+\infty}u^a (f_1(u)+f_2(u))=0$. From  Remark \ref{rk: lt_ht}, we therefore conclude that $X$ is light-tailed.
\subsection{Proof of Proposition \ref{thm: GAN and limit distribution}}

In this proof, we make extensive use of the limit measure of a multivariate regularly varying vector introduced in Definition~\ref{def: equivalent MRV}. Recall from Remark~\ref{rk: MRV} that, when the components of a random vector are non-negative, the angular measure defined in Equation~\eqref{eq: wan MRV} is the projection of its limit measure onto the simplex. Consequently, to prove that the angular measure is supported by finitely many points of the simplex, it is sufficient to show that the limit measure is concentrated on finitely many direction vectors.

\medskip
\noindent\textbf{Step 1: Limit measure of the prior.}
First, the limit measure $\mu_{\mathbf{Z}}$ of ${\bf Z}$ is concentrated on the coordinate axes. More precisely,
\[
\mu_{\mathbf{Z}}\!\left( \mathbb{R}^n \setminus \bigcup_{i=1}^n \{ t \mathbf{e}_i : t>0 \} \right) = 0,
\]
where $\mathbf{e}_i$ denotes the $i$-th canonical basis vector of $\mathbb{R}^n$ (with a $1$ in the $i$-th position and $0$ elsewhere).  
As shown in \cite[Section~6.5]{resnick2007heavy}, this follows from the i.i.d.\ nature of the components of ${\bf Z}$.

\medskip
\noindent\textbf{Step 2: Effect of linear transformations.}
The following lemma describes how the limit measure behaves under multiplication by a matrix.

\begin{lemma}\label{lem: matmul}
Let ${\bf Y}$ be a $d$-dimensional regularly varying random vector whose limit measure is supported on the finite set of direction vectors 
$\bigcup_{i=1}^{n'} \{ t \mathbf{v}_i : t>0\}$.  
For any $m\times d$ matrix $\mathbf{W}$, the transformed vector ${\bf WY}$ is also regularly varying, and its limit measure is supported on 
\[
\bigcup_{i=1}^{n'} \{ t \mathbf{W}\mathbf{v}_i : t>0 \},
\]
excluding those indices $i$ for which $\mathbf{W}\mathbf{v}_i = \mathbf{0}$.  
In particular, the limit measure of ${\bf WY}$ is supported on at most $n'$ direction vectors.
\end{lemma}

\begin{proof}
It follows from
\[
t\,\mathbb{P}\!\left(\frac{\mathbf{WY}}{b(t)}\in \bullet \right)
= t\,\mathbb{P}\!\left(\frac{\mathbf{Y}}{b(t)}\in \mathbf{W}^{-1}(\bullet)\right)
\overset{v}{\longrightarrow} \mu_{\mathbf{Y}}( \mathbf{W}^{-1}(\bullet))
=: \mu_{\mathbf{WY}}(\bullet),
\]
that $\mathbf{WY}$ is regularly varying.

If $\mu_{\mathbf{Y}}$ is supported on $\{t \mathbf{v}_i\}$, then $\mu_{\mathbf{Y}}\circ \mathbf{W}^{-1}(\mathbb{A})>0$ implies 
\(
\mathbf{W}^{-1}(\mathbb{A}) \cap \{t \mathbf{v}_i\} \neq \emptyset
\),
hence 
\(
\mathbb{A} \cap \{t\,\mathbf{W}v_i\} \neq \emptyset
\),
provided $\mathbf{W}\mathbf{v}_i \neq 0$.  
Thus the support of $\mu_{\mathbf{WY}}$ is precisely the set of nonzero images $\mathbf{W}\mathbf{v}_i$.
\end{proof}

\medskip
\noindent\textbf{Step 3: Effect of adding a bias.}
\begin{lemma}\label{lem: bias}
If the $d$-dimensional vector ${\bf Y}$ has multivariate regular variation with limit measure $\mu_{\bf Y}$, then for any $d$-dimensional vector ${\bf b}$, the shifted vector ${\bf Y}+{\bf b}$ is also regularly varying, with the same limit measure.  
Hence, if $\mu_{\bf Y}$ is supported on finitely many direction vectors, the same holds for the limit measure of ${\bf Y}+{\bf b}$.
\end{lemma}

\begin{proof}
Write $\varepsilon_t = {\bf b}/b(t)\to 0$.  
Let $\mathbb{A}$ be a relatively compact continuity set, bounded away from the origin. Then
\[
t\,\mathbb{P}\!\left(\frac{\mathbf{Y}+\mathbf{b}}{b(t)}\in \mathbb{A}\right)
= t\,\mathbb{P}\!\left(\frac{\mathbf{Y}}{b(t)}\in \mathbb{A}-\varepsilon_t\right).
\]
Define for $\delta > 0$ the sets
\[
\mathbb{A}^{-\delta} := \{x : x + u \in \mathbb{A} \text{ for all } u, \|u\| \le \delta \}, 
\qquad
\mathbb{A}^{+\delta} := \{x : \operatorname{dist}(x, \mathbb{A}) \le \delta\}.
\]
 When $\|\varepsilon_t\|\le \delta$,  
\[
\mathbb{A}^{-\delta} \subset \mathbb{A}-\varepsilon_t \subset \mathbb{A}^{+\delta}.
\]
Using the vague convergence of the rescaled measures
\(
t\,\mathbb{P}\!\left(\frac{\mathbf{Y}}{b(t)}\in\bullet\right)
\) to $\mu_{\mathbf{Y}},$
we obtain, letting $t\to\infty$,

\[
\mu_{\bf Y}(\mathbb{A}^{-\delta})
\le \liminf_{t\to\infty}t\,\mathbb{P}\!\left(\frac{\mathbf{Y}+\mathbf{b}}{b(t)}\in \mathbb{A}\right)
\le \limsup_{t\to\infty}t\,\mathbb{P}\!\left(\frac{\mathbf{Y}+\mathbf{b}}{b(t)}\in \mathbb{A}\right)
\le \mu_{\bf Y}(\mathbb{A}^{+\delta}).
\]
Since $\mu_{\bf Y}(\partial \mathbb{A})=0$ and $\mathbb{A}$ is bounded away from $0$, the regularity of Radon measures ensures that  
$\mu_{\bf Y}(\mathbb{A}^{\pm\delta})\to \mu_{\bf Y}(\mathbb{A})$.  
Hence the limit measure is preserved.
\end{proof}

\medskip
\noindent\textbf{Step 4: Effect of the ReLU activation.}
Let $\mathrm{ReLU}(x)=\max(x,0)$ applied componentwise. If the limit measure of ${\bf Y}$ is supported on $\{t v_i\}$, then the limit measure of $\mathrm{ReLU}({\bf Y})$ is supported on the directions  
\[
\{t\,\mathrm{ReLU}(v_i)\ :\ t>0\}.
\]
If $v_i$ has at least one negative coordinate, the corresponding coordinate in $\mathrm{ReLU}(v_i)$ becomes $0$, potentially collapsing the direction.  
Thus ReLU can only reduce the number of supporting direction vectors.

\medskip
\noindent\textbf{Step 5: Conclusion for neural networks.}
A ReLU neural network consists of successive applications of:
\begin{itemize}
\item multiplication by weight matrices,
\item addition of biases,
\item ReLU nonlinearities.
\end{itemize}
By iterating Lemmas~\ref{lem: matmul}--\ref{lem: bias} and the ReLU argument, the limit measure remains concentrated on finitely many direction vectors at every layer.  
Since the output has non-negative coordinates, its angular measure is supported by a finite set of points of the simplex.  
Moreover, the number of such points is bounded above by the input dimension $n$.

This completes the proof.

\subsection{Proof of Proposition \ref{prop: breiman}}
    Consider an exponential distribution $A$, with scale parameter $c$, and $Z_{rad}$ an inverse-gamma distribution with parameters $(\alpha,\beta)$. The cdf of $R$ is given by
    \begin{align*}
        P(R\leq t) &= \int_0^{+\infty}P(A\leq z) \times \frac{t}{z^2}f_{\bf Inv\Gamma}\left(\frac{t}{z}\ ;\ \alpha,\beta\right)dz, \\
                    &= 1 - \frac{\beta^\alpha}{t^\alpha \Gamma(\alpha)} \int_0^{+\infty}z^{\alpha-1}e^{-\frac{z}{c}}e^{-\frac{\beta z}{t}},\\
                    &= 1- \frac{\beta^\alpha}{t^\alpha}\left(\frac{1}{c}+\frac{\beta}{t}\right)^{-\alpha},\\
                    &= 1 - \left(1+ \frac{t}{\beta c}\right)^{-\alpha},\\
                    &= 1- \bar{H}_{\sigma,\xi}(t),
    \end{align*}
    with $\sigma = \frac{\beta c }{\alpha}$ and $\xi = \frac{1}{\alpha}$. Consequently, $R$ follows a generalized Pareto distribution. Notice that we use the change of variable $u=z\left(\frac{1}{c}+\frac{\beta}{t}\right)$ from the second to the third line of the above equations. 

\subsection{Proof of Proposition \ref{prop: Radii VAE}}
The aim of this proof is to show that the radius $R$ sampled by the VAE is heavy-tailed with tail index $\alpha$. We first establish that there exists $\gamma > 0$ such that $E[R^\gamma] = \infty$, implying that $R$ is necessarily heavy-tailed, in accordance with Remark \ref{rk: lt_ht}. We then argue that the only possible value of the tail index is $\alpha$.\\

The pdf of $R$ is given by
\begin{align*}
    p(r) &= \int_0^{+\infty} p_{\theta}(r \mid z_{rad})p_\alpha(z_{rad})dz_{rad},\\
        &= \int_0^{+\infty} f(r,z_{rad})dz_{rad},\\
\end{align*}
with 
\begin{align*}
    f(r,z_{rad}) &= f_{\mathbf{\Gamma}}\left(r\ ;\ \alpha_\theta(z_{rad}),\beta_\theta(z_{rad})\right)f_{\bf Inv\Gamma}(z_{rad}\ ;\ \alpha ,1).\\
\end{align*}
Since $\alpha_\theta$ is equal to a constant denoted $a_\theta$, it leads to the expression
\begin{equation*}
    f(r,z_{rad})= \frac{r^{a_\theta -1}}{\Gamma(\alpha)\Gamma(a_\theta)} z_{rad}^{-(\alpha+1)}\beta_{\theta}(z_{rad})^{a_\theta}e^{-r\beta_\theta(z_{rad}) -\frac{1}{z_{rad}}}.
\end{equation*}

From Equations \eqref{eq: limit beta} and \eqref{eq: limit beta 2}, we can state the existence of $m$ and $M$ two strictly positive constants such that, for any $z$,
\begin{equation*}
    \frac{m}{z}\leq \beta_\theta(z) \leq \frac{M}{z}.
\end{equation*}
Consequently,  
\begin{equation*}
    f_1(r,z_{rad})\leq f(r,z_{rad})\leq f_2(r,z_{rad})
\end{equation*}
with
\begin{align*}
    f_1(r,z_{rad}) &= \frac{r^{a_\theta -1}}{\Gamma(\alpha)\Gamma(a_\theta)} z_{rad}^{-(a_\theta + \alpha+1)}m^{a_\theta}e^{-r\frac{M}{z_{rad}} -\frac{1}{z}},\\
    f_2(r,z_{rad})&= \frac{r^{a_\theta -1}}{\Gamma(\alpha)\Gamma(a_\theta)} z_{rad}^{-(a_\theta + \alpha+1)}M^{a_\theta}e^{-r\frac{m}{z_{rad}} -\frac{1}{z}}.
\end{align*}
We can obtain analytical expressions of $\int_0^{+\infty}f_1(r,z_{rad})dz_{rad}$ and $\int_0^{+\infty}f_2(r,z_{rad})dz_{rad}$, 
\begin{align*}
    \int_0^{+\infty}f_1(r,z_{rad})dz_{rad} &= \frac{r^{a_\theta -1}m^{a_\theta}}{\Gamma(\alpha)\Gamma(a_\theta)} \int_0^{+\infty} z_{rad}^{-(a_\theta + \alpha+1)}e^{-r\frac{M}{z_{rad}} -\frac{1}{z_{rad}}}dz_{rad},\\
    &= \frac{r^{a_\theta -1}m^{a_\theta}}{\Gamma(\alpha)\Gamma(a_\theta)}\Gamma(a_\theta+\alpha)(1+rM)^{-a_\theta-\alpha},
\end{align*}
where we used the change of variables $u = \frac{1+rM}{z_{rad}}$. Using same arguments, we also obtain
\begin{equation*}
    \int_0^{+\infty}f_2(r,z_{rad})dz_{rad} = \frac{r^{a_\theta -1}M^{a_\theta}}{\Gamma(\alpha)\Gamma(a_\theta)}\Gamma(a_\theta+\alpha)(1+rm)^{-a_\theta-\alpha}.
\end{equation*}
It leads to these asymptotic results when $r \to \infty$,
\begin{align*}
    \int_0^{+\infty}f_1(r,z_{rad})dz_{rad} \propto r^{-\alpha-1},\\
    \int_0^{+\infty}f_2(r,z_{rad})dz_{rad} \propto r^{-\alpha-1}.
\end{align*}
Consequently, $E[R^\gamma]$ is infinite whenever $\gamma>\alpha$. $R$ is necessarily heavy-tailed. Since $E[R^\gamma]$ is finite for $\gamma<\alpha$, the only possible value of the tail index of $R$ is $\alpha$ \citep[see][Proposition A.38 (d)]{embrechts1999extreme}.
\subsection{Proof of Proposition \ref{prop: DKL InvGamma}}
Let $\alpha_1$, $\alpha_2$, $\beta_1$ and $\beta_2$ be strictly positive constants. The following holds.
\begin{align}
    D_{KL}\left(f_{\bf Inv\Gamma}(z;\alpha_1 ,\beta_1 )\mid \mid f_{\bf Inv\Gamma}(z;\alpha_2 ,\beta_2 )\right) &= E_{z\sim{\bf Inv\Gamma}(\alpha_1 ,\beta_1 )}\left[\log\left(\frac{f_{\bf Inv\Gamma}(z;\alpha_1 ,\beta_1 )}{f_{\bf Inv\Gamma}(z;\alpha_2 ,\beta_2 )} \right)\right] \nonumber\\
    &= E_{y\sim{\bf \Gamma}(\alpha_1 ,\beta_1 )}\left[\log\left(\frac{f_{\bf Inv\Gamma}(\frac{1}{y};\alpha_1 ,\beta_1 )}{f_{\bf Inv\Gamma}(\frac{1}{y};\alpha_2 ,\beta_2 )} \right)\right] \nonumber \\
    &= E_{y\sim{\bf \Gamma}(\alpha_1 ,\beta_1 )}\left[\log\left(\frac{f_{\bf \Gamma}(y;\alpha_1 ,\beta_1 )}{f_{\bf \Gamma}(y;\alpha_2 ,\beta_2 )} \right)\right]\nonumber \\
    &= D_{KL}\left(f_{\bf \Gamma}(z;\alpha_1 ,\beta_1 )\mid \mid f_{\bf \Gamma}(z;\alpha_2 ,\beta_2 )\right). \label{eq: proof prop DKL}
\end{align}
Equation \eqref{eq: DKL analytic} holds from Equation \eqref{eq: proof prop DKL} and the following result \citep{penny2006variational}:
\begin{eqnarray*}
D_{KL}\left(f_{\bf \Gamma}(z;\alpha_1 ,\beta_1 )\mid \mid f_{\bf \Gamma}(z;\alpha_2 ,\beta_2 )\right) = (\alpha_1 - \alpha_2)\psi(\alpha_1) - \log\frac{\Gamma(\alpha_1)}{\Gamma(\alpha_2)} \nonumber \\
    +\alpha\log\frac{\beta_1}{\beta_2} + \alpha_1\frac{\beta_2-\beta_1}{\beta_1}.
\end{eqnarray*}

\vskip 0.2in
\bibliography{biblio}

\end{document}